\def\Z{\mathbb{Z}}
\def\R{\mathbb{R}}
\def\Ex{\mathbb{E}}
\def\I{\mathbb{I}}
\def\nor{\mathrm{N}}
\def\cov{\mathrm{cov}}
\def\mean{\mathrm{mean}}
\def\mean{\mathrm{mean}}
\def\recerror{\mathrm{rec\_error}}
\def\D{\mathcal{D}}
\def\E{\mathcal{E}}
\def\Z{\mathcal{Z}}
\def\X{\mathbf{X}}
\def\wica{WICA}
\def\ii{\textbf{wii}}
\def\D{\mathcal{D}}
\newtheorem{observation}{Observation}[section]
\newtheorem{theor}{Theorem}[section]
\def\wica{\mbox{WICA}}
\def\mixname{\mbox{OTS}}
\theoremstyle{definition}
\newtheorem{definition}{Definition}[section]
\begin{document}

\title{\wica{}: nonlinear weighted ICA}

\author{Andrzej Bedychaj \institute{Jagiellonian University,
Poland, email: andrzej.bedychaj@gmail.com} \and Przemysław Spurek \and Aleksandra Nowak \and Jacek Tabor }

\maketitle

\begin{abstract}
Independent Component Analysis (ICA) aims to find a coordinate system in which the components of the data are independent. 
In~this paper we construct a new nonlinear ICA model, called \wica{}, which 
obtains better and more stable results than other algorithms.
A~crucial tool is given by a new efficient method of verifying nonlinear dependence with the use of computation of correlation coefficients for normally weighted data. In addition, authors propose a new baseline nonlinear mixing to perform comparable experiments, and a~reliable measure which allows fair comparison of nonlinear models.
Our code for \wica{} is available on Github\footnote[2]{\url{https://github.com/gmum/wica}}.
\end{abstract}

\section{Introduction}

The goal of Linear Independent Component Analysis (ICA) is to find such a unmixing function of the given data that the resulting representation has statistically independent components. Common tools solving this problem are based on maximizing some measure of nongaussianity, e.g. kurtosis \Citep{hyvarinen1999fast,bell1995information} or skewness~\Citep{spurek2017ica}.
Clearly, an obvious limitation of those approaches is the assumption of linearity,  as the real world data usually contains complicated and nonlinear dependencies (see for instance
\Citep{larson1998radio,ziehe2000artifact}). 
Designing an efficient and easily implementable nonlinear analogue~of~ICA~is~a~much~more complex problem than its linear counterpart. A crucial complication is that without any limitations imposed on the space of the mixing functions the problem of nonlinear-ICA is ill-posed, as there are 
infinitely many valid solutions \Citep{hyvarinen1999nonlinear}.


As an alternative to the fully unsupervised setting of the nonlinear~ICA one can assume some prior knowledge about the distribution of the sources, which allows to obtain identifiability \Citep{hyvarinen2016unsupervised,hyvarinen2019nonlinear}. Several algorithms exploiting this property have been recently proposed, either assuming access to segment labels of the sources \Citep{hyvarinen2016unsupervised}, temporal dependency of the sources \Citep{hyvarinen2017nonlinear} or, generally, that the sources are conditionally independent, and the conditional variable is observed along with the mixes \Citep{hyvarinen2019nonlinear, khemakhem2019variational}. However it may be sometimes hard to generalize those approaches in fully unsupervised setting where some prior knowledge is unavailable or the qualities of the data itself preserve unknown for the researcher.

An additional complication in devising nonliear-ICA algorithms lies in proposing an efficient measure of~independence, which optimization would encourage the model to disentangle the components. One of the most common nonlinear method is MISEP \Citep{almeida2003misep} which, similar to the popular INFOMAX algorithm \Citep{bell1995information}, uses the mutual information criterion. In consequence, the procedure involves the~calculation of the Jacobian of the modeled nonlinear transformation, which often causes a computational overhead when both the input and output dimensions are large.

Another approach is applied in~NICE (Nonlinear Independent Component Estimation) \Citep{dinh2014nice}. Authors propose a fully invertible neural network architecture where the Jacobian is trivially obtained. The independent components are then estimated using the maximum likelihood criterion. The drawback of both MISEP and NICE is that they require choosing the prior distribution family of the unknown independent components. An alternative approach is given by ANICA (Adversarial nonlinear ICA) \Citep{brakel2017learning}, where the independence measure is directly learned in each task with the use of GAN-like adversarial method combined with an an~autoencoder architecture. However, the introduction of a GAN-based independence measure results in an often unstable adversarial training. 

In this paper we present a competitive approach to nonlinear independent components analysis~--~\wica{} (Nonlinear Weighted~ICA). Crucial role in our approach is played by the conclusion from~\Citep{bedychaj2019independent}, which proves that to verify nonlinear independence it is sufficient to~check the linear independence of the normally weighted dataset, see Fig. \ref{fig:cov}. Based on this result we introduce {\em weighted indepedence index ($\ii{}$)} which relies on computing weighted covariance and~can be applied to the verification of the nonlinear independence, see Section~\ref{section:wc}.
Consequently, the constructed \wica{} algorithm is based on simple operations on matrices, and therefore~is~ideal~for~GPU~calculation and parallel processing. We construct it by incorporating the introduced cost function in a commonly used in ICA problems auto-encoder framework \Citep{brakel2017learning,le2011ica}, where the~role of the decoder is to limit the unmixing function so that the~learned by the encoder independent components contained the~information needed to reconstruct the inputs, see Section~\ref{sec:algorithm}.

We verified our algorithm in the case of a source signal separation problem. In Section \ref{sec:experiments}, we presented the results of \wica{} for nonlinear mixes of images and for the decomposition of electroencephalogram signals.  
It occurs that \wica{} outperforms other methods of nonlinear ICA, both with respect to unmixing quality and the stability of the results, see Fig.~\ref{fig:spearman_1}. 

To fairly evaluate various nonlinear ICA methods in the case of~higher dimensional datasets, we introduce a measure index called \mixname{} based on Spearman's rank correlation coefficient. In the definition of~\mixname{}, similarly to the clustering accuracy (ACC) \Citep{cai2005document,cai2010locally}, we~used optimal transport to obtain the minimal mismatch cost. This approach has its merit here, since the~correspondence between the~input coordinates and the reconstructed components in a higher dimensional space is nontrivial. 

Another important ingredient of this paper is the introduction of~a~new and fully invertible nonlinear mixing function. In the case of linear ICA, one can easily construct many experiment settings that can be used in order to evaluate and compare different methods. Such standards are unfortunately not present in the case of nonlinear~ICA. Therefore it is not clear what kind of nonlinear mixing should be used in the benchmark experiments. In most cases the authors usually use mixing functions, which correspond with the models architecture \Citep{almeida2003misep,brakel2017learning}. In contrast to such methodology, we propose a new iterative nonlinear mixing function based on the flow models \Citep{dinh2014nice, glow}. This method does not relates to internal design of our network architecture, is invertible and allows for chaining the task complexity by varying the number of iterations, making it a useful tool in verification of the nonlinear ICA models.


\section{Weighted independence index}
\label{section:wc}

Let us consider a random vector $\X$ in $\R^d$ with density $f$. Then $\X$ has independent components iff $f$ factors as 
$$
f\left(x_1,x_2, \ldots,x_d\right)=f_1(x_1) \cdot f_2(x_2) \cdot \ldots \cdot f_d(x_d),
$$
for some densities $f_i$, where $i\in \{1,2,\ldots,d\}$. Those functions are called marginal densities of $f$. A related, but much weaker notion, is the uncorrelatedness.  We say that $\X$ has uncorrelated components, if the covariance of $\X$ is diagonal. Contrary to the independence, correlation has fast and easy to compute estimators. Components independence implies uncorrelatedness, but the opposite is not valid, see Fig.~\ref{fig:cov}.  

\begin{figure}[!h]
\centering
\includegraphics[width=0.49\linewidth]{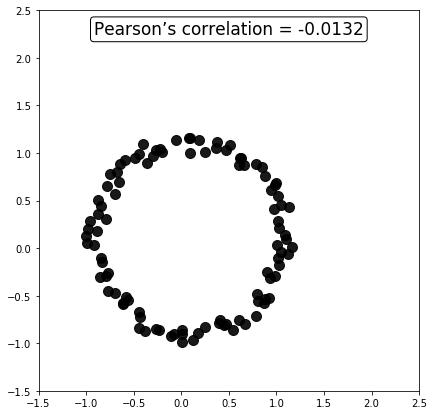}
\includegraphics[width=0.49\linewidth]{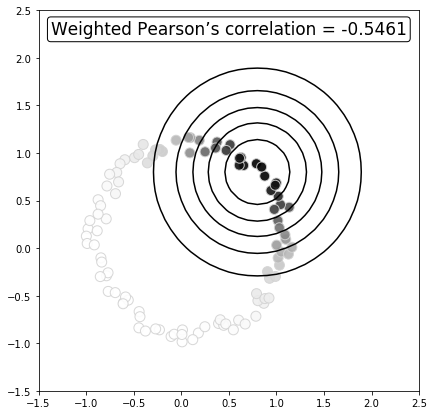}
\caption{Sample from a random vector which Pearson’s correlation is equal to zero (left), but the components are not independent. Since the components are not independent, one can choose Gaussian weights so that Pearson’s correlation of weighted dataset~is~not~zero~(right).}
\label{fig:cov}
\end{figure}

Let us mention that there exist several measures which verify the~independence. One of the most well-known measures of independence of random vectors is the {\em distance correlation} (dCor) \Citep{szekely2007measuring}, which is applied in \Citep{matteson2017independent} to solve the linear ICA problem. Unfortunately, to verify the independence of components of the~samples, dCor~needs~$2^d N^4$~comparisons, where $d$ is the dimension of the sample and $N$ is the sample size. Moreover, even a simplified version of dCor which checks only pairwise independence has high complexity and does not obtain very good results (which can be seen in experiments from Section \ref{sec:experiments}). This motivates the research into fast, stable and efficient measures of indepedence, which are adapted to GPU processing.

\subsection{Introducing the \ii{} index}

In this subsection we fill this gap and introduce a method of verifying independence which is based on the covariance of the weighted data. The covariance scales well with respect to the sample size and data dimension, therefore the proposed covariance-based index inherits similar properties.

To proceed further, let us introduce weighted random vectors.
\begin{definition}
Let $w:\R^d \to \R_+$ be a bounded weighting function. By $\X_w$ we denote a weighted random vector with a density\footnote[3]{This is just the normalization of $w(x) f(x)$.}
$$
f_w(x)= \frac{w(x)f(x)}{\int w(z)f(z)dz}.
$$
\end{definition}

\begin{observation} \label{pr:1}
Let $\X$ be a random vector which has independent components, and let $w$ be an arbitrary weighting function. Then $\X_{w}$ has independent components as well.
\end{observation}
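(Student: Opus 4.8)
The plan is to argue directly at the level of densities, using the factorization that defines independence together with the product structure of the weights. First I would unwind the definition: $\X$ has independent components precisely when its density factors as $f(x_1,\dots,x_d)=f_1(x_1)\cdots f_d(x_d)$ for one-dimensional densities $f_i$. Substituting this into the formula for $f_w$ from the definition of a weighted random vector, the problem reduces to showing that
\[
f_w(x)=\frac{w(x)\,f_1(x_1)\cdots f_d(x_d)}{\int w(z)\,f_1(z_1)\cdots f_d(z_d)\,dz}
\]
again factors coordinatewise.

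The key step is that the weighting functions relevant to the \ii{} index --- Gaussian kernels with diagonal covariance --- are themselves of product form $w(x)=w_1(x_1)\cdots w_d(x_d)$, so I would carry out the argument under this assumption (and flag that it is genuinely needed). Then the numerator becomes $\prod_i w_i(x_i)f_i(x_i)$, and by Fubini the normalizing constant splits as $\prod_i\int w_i(z_i)f_i(z_i)\,dz_i$; dividing and matching the $i$-th factors yields
\[
f_w(x)=\prod_{i=1}^d\frac{w_i(x_i)f_i(x_i)}{\int w_i(z_i)f_i(z_i)\,dz_i},
\]
where each factor is nonnegative and integrates to one, hence is a one-dimensional density. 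By the characterization of independence recalled above, this is exactly the assertion that $\X_w$ has independent components.

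I expect the genuine content to sit in the product-structure hypothesis rather than in the algebra: for an arbitrary bounded $w$ the statement is false (reweighting a vector uniform on the unit square by $1+x_1x_2$ destroys independence), so the step that must be made precise is that every weight entering the construction factors over coordinates. The remaining points are routine bookkeeping --- using boundedness of $w$ (and of its coordinate factors), together with $f$ being a density, to see that all the integrals above are finite, and using positivity of $w$ on a set of positive $f$-measure to see that they are strictly positive, so that $f_w$ and each of its one-dimensional factors are well defined.
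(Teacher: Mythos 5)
The paper states this observation without any proof, so there is no argument of its own to compare against; your factorization argument --- substitute $f=f_1\cdots f_d$ into the definition of $f_w$, use Fubini to split the normalizing constant into $\prod_i \int w_i(z_i)f_i(z_i)\,dz_i$, and read off one-dimensional densities --- is exactly the natural (and surely intended) proof, and you carry it out correctly. More importantly, your caveat is right and is the real content here: as literally stated, with $w$ an \emph{arbitrary} bounded weighting function, the observation is false, and your counterexample works (the uniform density on the unit square reweighted by $1+x_1x_2$ gives $f_w\propto 1+x_1x_2$, which does not factor, so the components become dependent). The conclusion holds precisely when $w$ itself is of product form $w(x)=w_1(x_1)\cdots w_d(x_d)$, which is satisfied by every weight the paper actually uses, since $\nor(p,\I)$ has diagonal covariance and hence factors into one-dimensional Gaussians; so the observation is true in the form in which it is invoked, but the hypothesis should read ``arbitrary product-form weighting function.'' Your remaining bookkeeping (finiteness of the integrals from boundedness of $w$ and $f$ being a density, strict positivity of the normalizer so that each factor $w_i f_i/\int w_i f_i$ is a genuine density) is routine and handled adequately.
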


One of the main results of \Citep{bedychaj2019independent} is that the strong version of the~inverse of the above theorem holds. Given $m \in \R^d$, we consider the~weighting of $\X$ by the standard normal gaussian with center at $m$ ($\nor(m,\I)$):
$$
\X_{[m]}=\X_{\nor(m,\I)}.
$$
We quote the following result which follows directly from the proof of Theorem 2 from \Citep{bedychaj2019independent}:

\begin{theor}
\label{theorem:1}
Let $\X$ be a random vector, let $p \in \R^d$ and $r>0$ be arbitrary. If $\X_{[q]}$ has linearly independent components for every $q \in B( p,r)$, where $B(p,r)$ is a ball with center in $p$ and radius $r$, then $\X$ has the~independent components.
\end{theor}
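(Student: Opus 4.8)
The statement to prove is Theorem~\ref{theorem:1}: if $\X_{[q]}$ has uncorrelated (``linearly independent'') components for all $q$ in some ball $B(p,r)$, then $\X$ has independent components. The natural strategy is to pass to the Fourier/Laplace side. Write $f$ for the density of $\X$, and note that the density of $\X_{[q]}$ is proportional to $x\mapsto f(x)\exp(-\tfrac12\|x-q\|^2)$, which (absorbing $\exp(-\tfrac12\|x\|^2)$ into a new function $g(x):=f(x)e^{-\|x\|^2/2}$ and pulling the constant $e^{-\|q\|^2/2}$ out) is proportional to $g(x)e^{\langle q,x\rangle}$. Thus the hypothesis says: for every $q\in B(p,r)$ the measure with density $g(x)e^{\langle q,x\rangle}$, once normalized, has a diagonal covariance matrix. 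I would first record this reduction carefully, since it turns the weighting by Gaussians into an exponential tilt, and the family $\{g(x)e^{\langle q,x\rangle}\}$ is exactly the thing controlled by the moment generating function $M(q):=\int g(x)e^{\langle q,x\rangle}\,dx$, which is finite and real-analytic on a neighborhood of $p$ because $g$ has Gaussian decay.

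**Key steps.** Fix a pair of indices $i\neq j$. The uncorrelatedness of $\X_{[q]}$ in coordinates $i,j$ says
\[
\frac{\partial_i\partial_j M(q)}{M(q)}-\frac{\partial_i M(q)\,\partial_j M(q)}{M(q)^2}=0
\qu\text{for all }q\in B(p,r),
\]
i.e. $\partial_i\partial_j \log M(q)=0$ on the ball. Since $\log M$ is real-analytic there, this mixed partial being identically zero on an open set forces $\log M$ to split as a sum $\log M(q)=\sum_{k=1}^d \varphi_k(q_k)+c$ of functions each depending on a single coordinate — first on the ball, then everywhere by analytic continuation, using that $M$ is analytic (indeed entire, with values in $(0,\infty]$) wherever it is finite and that the identities $\partial_i\partial_j\log M\equiv 0$ propagate. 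Exponentiating, $M(q)=C\prod_k e^{\varphi_k(q_k)}=:C\prod_k \psi_k(q_k)$ on a full-dimensional neighborhood. But $M$ is the (two-sided) Laplace transform of $g$, and a Laplace transform that factors as a product of one-variable functions is the Laplace transform of a product of one-variable functions; inverting, $g(x)=c\prod_k h_k(x_k)$ for suitable $h_k$. Finally divide out the Gaussian: $f(x)=g(x)e^{\|x\|^2/2}=c\prod_k\big(h_k(x_k)e^{x_k^2/2}\big)$, so $f$ factors into marginals and $\X$ has independent components.

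**Main obstacle.** The routine part is the covariance-to-$\log M$ computation; the delicate part is the analytic bookkeeping around the Laplace transform. Two points need care. First, one must ensure $M(q)$ is finite (hence the whole argument runs) not just at $p$ but on an open set — this is where the Gaussian weight is essential, guaranteeing $g$ decays like $e^{-\|x\|^2/2}$ times an integrable density, so $M$ is entire. Second, one must justify that $\partial_i\partial_j\log M\equiv0$ on a ball implies the global product structure $M(q)=C\prod_k\psi_k(q_k)$: this uses that $\log M$ is real-analytic on $\R^d$ (since $M>0$ everywhere by positivity of $g$) together with the identity theorem for real-analytic functions, so the factorization established on $B(p,r)$ extends to all of $\R^d$. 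Once $M$ is known to factor globally, inverting the Laplace transform to get $g(x)=c\prod_k h_k(x_k)$ is standard (e.g. via Fubini on the inversion integral, or by uniqueness of Laplace transforms applied coordinatewise). I would present the $\log M$ factorization as the technical heart and keep the Laplace-inversion step at the level of ``by uniqueness of the Laplace transform.'' Note also that Observation~\ref{pr:1} gives the trivial converse, so together one obtains the advertised equivalence.
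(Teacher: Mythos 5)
Your proposal is correct: rewriting the Gaussian weighting at $q$ as an exponential tilt of $g(x)=f(x)e^{-\|x\|^2/2}$ turns the hypothesis into $\partial_i\partial_j\log M\equiv 0$ on $B(p,r)$, and real-analyticity of $M$ (finite everywhere thanks to the Gaussian factor), the identity theorem, and uniqueness of the moment-generating/Laplace transform then give the product form of $f$. This is essentially the same route as the proof this paper relies on (it only cites Theorem 2 of Bedychaj et al.\ 2019, where the weighted covariance is identified with $\I+\nabla^2\log\left(f*\nor(0,\I)\right)(q)$, which differs from your $\nabla^2\log M(q)$ only by the $-\tfrac12\|q\|^2$ term you absorb), so no substantive difference in approach.
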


Given sample $X=(x_i) \subset \R^d$, vector $p \in \R^d$, and weights $w_i = \nor(p,\I)(x_i)$, we define the weighted sample as:
$$
X_{[p]}=(x_i,w_i).
$$
Then the mean and covariance for the weighted sample \mbox{$X_{[p]}=(x_i,w_i)$} is given by:
$$
\mean X_{w}=\frac{1}{\sum_i w_i}\sum_i w_i x_i
$$
and 
$$
\cov X_{w}=\frac{1}{\sum_i w_i}\sum_i w_i (x_i-\mean X_w)^T(x_i-\mean X_w).
$$

The informal conclusion from the above theorem can be stated as follows: \textit{if 
$\cov X_{[p]}$ is (approximately) diagonal for a sufficiently large set of $p$, then the sample $X$ was generated from a distribution with independent components.}

Let us now define an index which will measure the distance from being independent. We define the {\em weighted independence index} $\left(\ii(X,p)\right)$ as 
$$
\ii(X,p)=\frac{2}{d(d-1)} \sum_{i<j} c_{ij},
$$
where $d$ is the dimension of $X$ and
$$
c_{ij}=\frac{2z_{ij}^2}{z_{ii}^2+z_{jj}^2},
$$
for $Z=[z_{ij}]=\cov X_{[p]}.$

\begin{observation}

Let us first observe that $c_{ij}$ is a close measure to the correlation $\rho_{ij}$, namely:
$$
c_{ij} \leq \rho^2_{ij},
$$
where the equality holds iff the $i$-th and $j$-th components in $X_{[p]}$ have equal standard deviations.
\end{observation}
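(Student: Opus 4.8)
The plan is to unwind the definitions and reduce everything to the elementary inequality $2ab\le a^2+b^2$. Set $Z=[z_{ij}]=\cov X_{[p]}$, so that $z_{ii}$ is the weighted variance of the $i$-th coordinate of $X_{[p]}$ and $z_{ij}$ its weighted covariance with the $j$-th one. Throughout one assumes $z_{ii},z_{jj}>0$, which is exactly what makes the Pearson coefficient $\rho_{ij}=z_{ij}/\sqrt{z_{ii}z_{jj}}$ well defined; hence $\rho_{ij}^2=z_{ij}^2/(z_{ii}z_{jj})$.

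The whole statement then follows from the single identity
$$
\rho_{ij}^2-c_{ij}
=z_{ij}^2\left(\frac{1}{z_{ii}z_{jj}}-\frac{2}{z_{ii}^2+z_{jj}^2}\right)
=\frac{z_{ij}^2\,(z_{ii}-z_{jj})^2}{z_{ii}z_{jj}\,(z_{ii}^2+z_{jj}^2)},
$$
which I would obtain by simply putting the two fractions over a common denominator. Since $z_{ii},z_{jj}>0$, the denominator on the right is strictly positive and the numerator is a product of two squares, hence nonnegative; this gives $c_{ij}\le\rho_{ij}^2$ at once.

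For the equality case, the right-hand side vanishes precisely when $z_{ij}=0$ or $z_{ii}=z_{jj}$. The first alternative is the degenerate situation in which the two components of $X_{[p]}$ are uncorrelated and both $c_{ij}$ and $\rho_{ij}^2$ equal $0$; discarding it (i.e.\ assuming the components are actually correlated), equality holds iff $z_{ii}=z_{jj}$, that is, iff the $i$-th and $j$-th components of $X_{[p]}$ have equal variances, equivalently equal standard deviations. I do not expect any genuine obstacle here: the only points requiring a moment's care are the nonnegativity/positivity bookkeeping that licenses the algebraic manipulation, and the explicit handling of the degenerate cases (a vanishing weighted variance, for which $\rho_{ij}$ is undefined, or a vanishing weighted covariance, for which both quantities are trivially zero).
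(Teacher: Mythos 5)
Your proof is correct and is essentially the same as the paper's: the explicit identity $\rho_{ij}^2-c_{ij}=\frac{z_{ij}^2(z_{ii}-z_{jj})^2}{z_{ii}z_{jj}(z_{ii}^2+z_{jj}^2)}$ is just the inequality $ab\le\tfrac12(a^2+b^2)$ (with $a=z_{ii}$, $b=z_{jj}$) written out over a common denominator, which is exactly how the paper argues. Your additional remark that equality also holds in the degenerate case $z_{ij}=0$ regardless of the standard deviations is a legitimate refinement of the stated ``iff'' that the paper's one-line proof silently passes over.
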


\begin{proof}
Obviously 
$$
\rho^2_{ij}=\frac{z_{ij}^2}{z_{ii} \cdot z_{jj}}.
$$
Since $ab \leq \frac{1}{2}(a^2+b^2)$ (where the equality holds iff $a=b$), we obtain the assertion of the observation.
\end{proof}

Consequently, $\ii(X,p)=1$ iff all components of $X_{[p]}$ are linearly dependent and have equal standard deviations. Thus, the minimization of $\ii$ simultaneously aims at maximizing the independence and increasing the difference between the standard deviations.

We extend the index for a sequence of points $\{p_1,p_2,\ldots,p_n\}$ , as the mean of the indexes for each $p_i$:
$$
\ii(X;\{p_1,p_2,\ldots,p_n\})=\frac{1}{n}\sum_{i=1}^n\ii(X,p_i).
$$

\subsection{Selecting the weighting points}
\label{sec:weightpoints}

To implement the weighted independence index in practice, we need to find the optimal choice of weighting centers $(p_i)$. 
First, we assume that the dataset in question is normalized componentwise (in particular, variance of each coordinate is one).
We argue that the 
right choice of $(p_i)$ should satisfy the following two conditions:
\begin{itemize}
    \item selected weights do not concentrate on a small percentage of the~data,
    \item for different centers selected from the dataset, weights diversify the data points.
\end{itemize}
At first glance, it would seem that the simplest choice for points $(p_i)$ is to sample them from the standard normal distribution.  However, the conducted by us preliminary experiments (see Fig. \ref{fig:co}) demonstrate that sampling from 
$\nor\left(0,\tfrac{1}{d}\I\right)$ would be a better choice.


\begin{figure}
\centering
\includegraphics[width=0.7\linewidth]{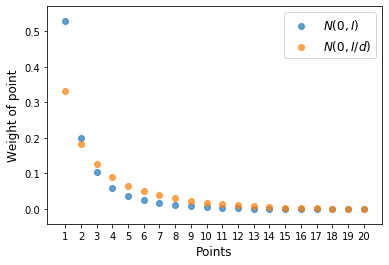}
\caption{In the experiment, we sampled twenty points from $\nor(0,\I)$~\mbox{(x-axis)}. Then, we calculate weights of the points respectively to $\nor(0,\I)$ and $\nor\left(0,\tfrac{1}{d}\I\right)$. We present values of those weights (sorted decreasingly) in the case when the center is chosen according to \mbox{$\nor(0,\I)$ vs. $\nor\left(0,\tfrac{1}{d}\I\right)$}. One can see that weights derived from $\nor\left(0,\tfrac{1}{d}\I\right)$ actually balance more data points, in contrary to $\nor(0,\I)$ which focus on smaller amount of data ($\nor(0,\I)$ converges to~0~earlier). }\label{fig:co}
\end{figure}

Consider the case when the data come from the standard normal distribution. For given weights $w$ and density $f$ we define measure $P(w,f)$ as:

\begin{equation}
P(w,f)=\frac{\left(\int w(x) f(x) dx\right)^2}{\int w^2(x) f(x) dx}. \label{eq:P_measure}
\end{equation}

Observe that if $w$ is constant on a subset $U$ of some space $S$ (for which functions $w$ and $f$ are well-defined) and zero otherwise, then the above reduces to $\mu(U)$, where $\mu$ is counting measure. Intuitively, \textit{$P(w,f)$ returns the percentage of the population which has nontrivial weights.}

Let us consider the case when $\mu$ is given by the standard normal density
$$
w(x)=\nor(p,\I)(x)
$$
and our dataset is normalized as stated above. Then, directly from~(\ref{eq:P_measure}), one obtains:
$$
P(w,f)=\frac{\left(\int \nor(p,\I)(x) \nor(0,\I)(x)dx\right)^2}{\int \nor(p,\I)^2(x)\nor(0,\I)(x)dx }
$$
Applying the formula for the product of two normal densities:
$$
\nor(m_1,\Sigma_1)(x) \cdot \nor(m_2,\Sigma_2)(x) \, =c_c\nor(m_c,\Sigma_c)(x),
$$
where 
$$ c_c=\nor(m_1-m_2,\Sigma_1+\Sigma_2)(0), $$
$$\Sigma_c=(\Sigma_1^{-1}+\Sigma_2^{-1})^{-1},$$
and 
$$m_c=\Sigma_c(\Sigma_1^{-1}m_1+\Sigma_2^{-1} m_2),$$
we get:
$$
\int \nor(p,\I)(x) \nor(0,\I)(x)dx=\nor(p,2\I)(0),
$$
for the numerator, and 
$$
\int \nor\left(p,\I\right)^2(x)\nor(0,\I)(x)dx=\nor(0,2\I)(0) \nor\left(p,\tfrac{3}{2}\I\right)(0).
$$
for the denominator. The equation for the denominator follows from the simple fact that:
$$
\nor(p,\I)^2(x)=\nor(0,2\I)(0) \cdot \nor\left(p,\tfrac{1}{2}\I\right)(x),
$$
Summarizing, we obtain that 
\begin{equation} \label{eq:eq2}
\begin{split}
P(\nor(p,\I),\nor(0,\I)) & = \frac{\nor(p,2\I)^2(0)}{\nor(0,2\I)(0) \nor\left(p,\tfrac{3}{2}\I\right)(0)} \\
 & = \left(\tfrac{3}{4}\right)^{D/2} \exp\left(-\tfrac{1}{6}\|p\|^2\right).
\end{split}
\end{equation}
Normalizing (\ref{eq:eq2}) by its maximum obtained at $0$, we get
$$
\exp\left(-\tfrac{1}{6}\|p\|^2\right).
$$

Clearly if $p$ would be chosen from the standard normal distribution, the value of $\|p\|^2$
for large dimensions equals approximately $d$, and consequently the weights for the randomly chosen points will become concentrated at a single point (see Fig \ref{fig:co}).
To obtain the quotient approximately constant, we should choose $p$ so that its norm is approximately one. Hence, it leads to the choice of $p$ from the distribution $\nor\left(0,\tfrac{1}{d}\I\right)$.

One can observe, that if $\X \sim \nor(0,\I)$, then we can sample from $\nor\left(0,\tfrac{1}{d}\I\right)$ by taking the mean of $d$ randomly chosen vectors from $\X$. This leads to the following definition:

\begin{definition}
For the dataset $X \subset \R^d$, we define
$$
\ii(X)=\Ex \{\ii(Y,p): \mbox{$p$ a mean of random $d$ elements of $Y$}\},
$$
where $Y$ is a componentwise normalization of $X$ and $\Ex$ stands for expected value.
\end{definition}

Let us summarize why centering the weights at the mean of $d$ elements from the dataset has good properties:
\begin{itemize}
    \item if the data is restricted to some subspace $S$ of the space, then mean also belongs to $S$;
    \item if the data comes from normal distribution $\nor(m,\Sigma)$, then mean of $d$ elements comes from $\nor\left(m,\tfrac{1}{d}\Sigma\right)$,
    \item if the data has heavy tails (i.e. comes from Cauchy distribution), then~the~distribution of mean for $d$ elements set can be close to the original dataset mean.
\end{itemize}

\section{The WICA algorithm}
\label{sec:algorithm}

In this section we propose the \wica{} algorithm for nonlinear ICA decomposition which exploits the $\ii(X)$ index in practice.

Following \Citep{brakel2017learning}, we use an auto-encoder (AE) architecture, which consists of an encoder function $\E:~\R^d~\to~\Z$ and a complementary decoder function $\D:\Z \to \R^d$. The role of the encoder is to learn a transformation of the data that unmixes the latent components, utilizing some measure of independence (we use the $\ii(X)$ index). The decoder is responsible for limiting the encoder, so that the learned representation does not lose any information about the input. In practice, this is implemented by simultaneously minimizing the reconstruction error:

$$\recerror(X;\E,\D)=\sum_{i=1}^d \|x_i-\D(\E x_i)\|^2.$$





Reducing the difference between the input and the output is crucial to recover unmixing mapping close to inverse of the mixing one. Thus our final cost function is given by
\begin{equation}
\mathrm{cost}(X;\E,\D)=\recerror(X;\E,\D)+\beta \ii(\E X). \label{eq:cost_function}
\end{equation}
where $\beta$ is a hyperparameter which aims to weight the role of reconstruction with that of independence (analogous~to~\mbox{$\beta$-VAE}~\Citep{Higgins2017betaVAELB}). The~training procedure follows the steps:

\begin{algorithm}[H] \caption{\wica{}}\label{alg:algorithm_wica}
\begin{enumerate}
\item Take mini-batch $X'$ from the dataset $X$.
\item Normalize componentwise $\E X'$, to obtain $Y$
\item Compute $p_1,\ldots,p_d$, where $p_i$ is the mean of randomly chosen~$d$~elements from $Y$,
\item Minimize:
$$
\recerror(X';\E,\D) + \beta \ii(Y;p_1,\ldots,p_d).
$$
\end{enumerate}
\end{algorithm}



\section{Nonlinear mixing} \label{sec:mixing}

Let us start with a discussion of possible definitions of the nonlinear mixing function used for benchmarking the ICA methods. In the beginning we shortly explain some approaches used in the linear ICA, and then move forward to propose a mixing which benefits from properties desired in the comparison of the results obtained by nonlinear ICA algorithms.

In the case of linear ICA the experiments are usually conveyed on an artificial dataset, which is obtained by mixing two or more of~independent source signals. This allows for the comparison of~the~results returned by the analyzed methods with the original independent components. In the real-world applications such a procedure is of~course infeasible, but in experimental setting it provides a good basis for~benchmarking different models. In classical ICA setup, creating an artificial mixing function is equivalent to selecting a random invertible matrix $A$, such that $X=A\cdot S$, where $S$ are the true sources and $X$ are the observations, which are then passed to the evaluated methods. Such mixing is used by \Citep{bedychaj2019independent,hyvarinen1999fast,spurek2017ica}.

Unfortunately, there do not exist any mixing standards for the nonlinear ICA problem. A common setup of the comparable environments needed to test the nonlinear models of ICA is to interlace linear mixes of signals with nonlinear functions \Citep{almeida2003misep, brakel2017learning}. During our experiments we found that the proposed methods of nonlinear mixes are ineffective in large dimensions. The aforementioned approaches usually apply only a shallow stack of linear projections followed by a nonlinearity. In consequence, the obtained observations are either close to the linear mixing (and therefore not hard enough to be properly challenging for the linear models) or become degenerate (i.e.~all~points cluster towards zero). Results of such mixing techniques are presented on Fig. \ref{lattice_anica}.


\begin{figure}
\centering
\subfigure[PNL - Iteration 1]{
\includegraphics[width=0.22\linewidth]{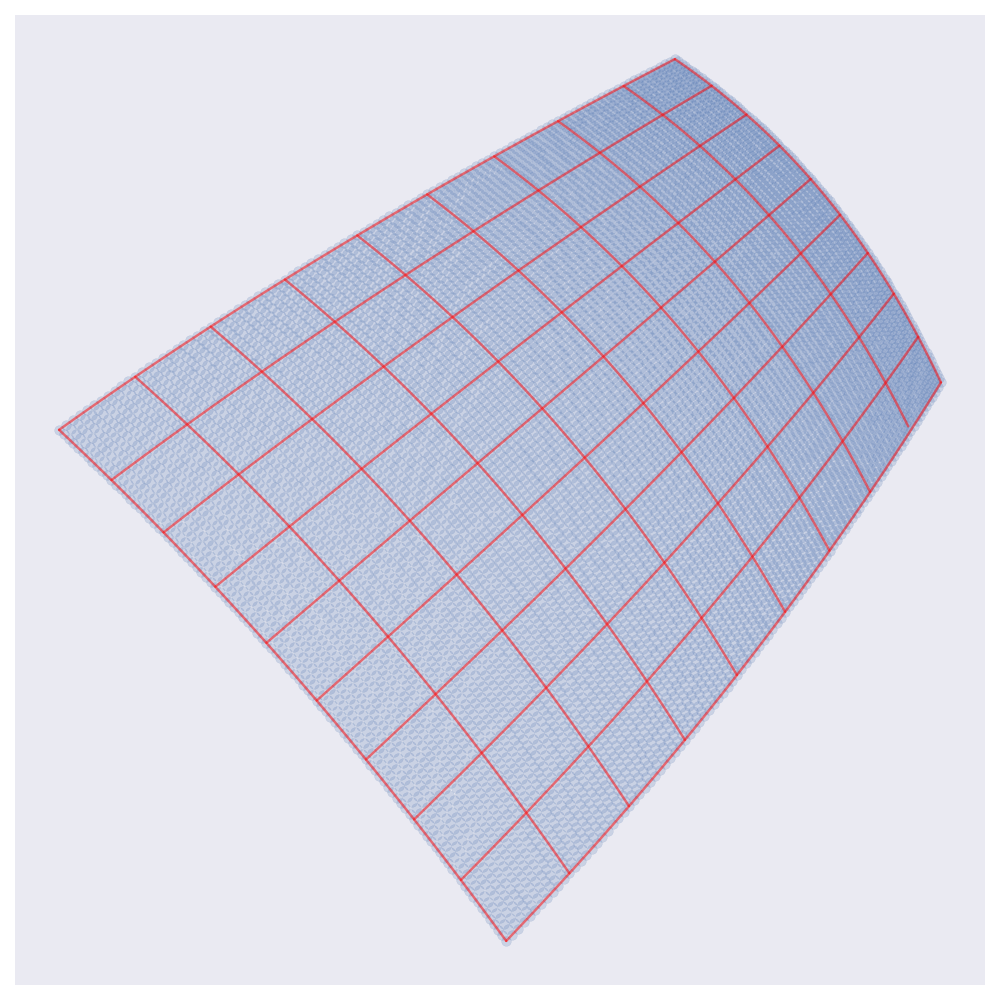}
}
\subfigure[PNL - Iteration 3]{
\includegraphics[width=0.22\linewidth]{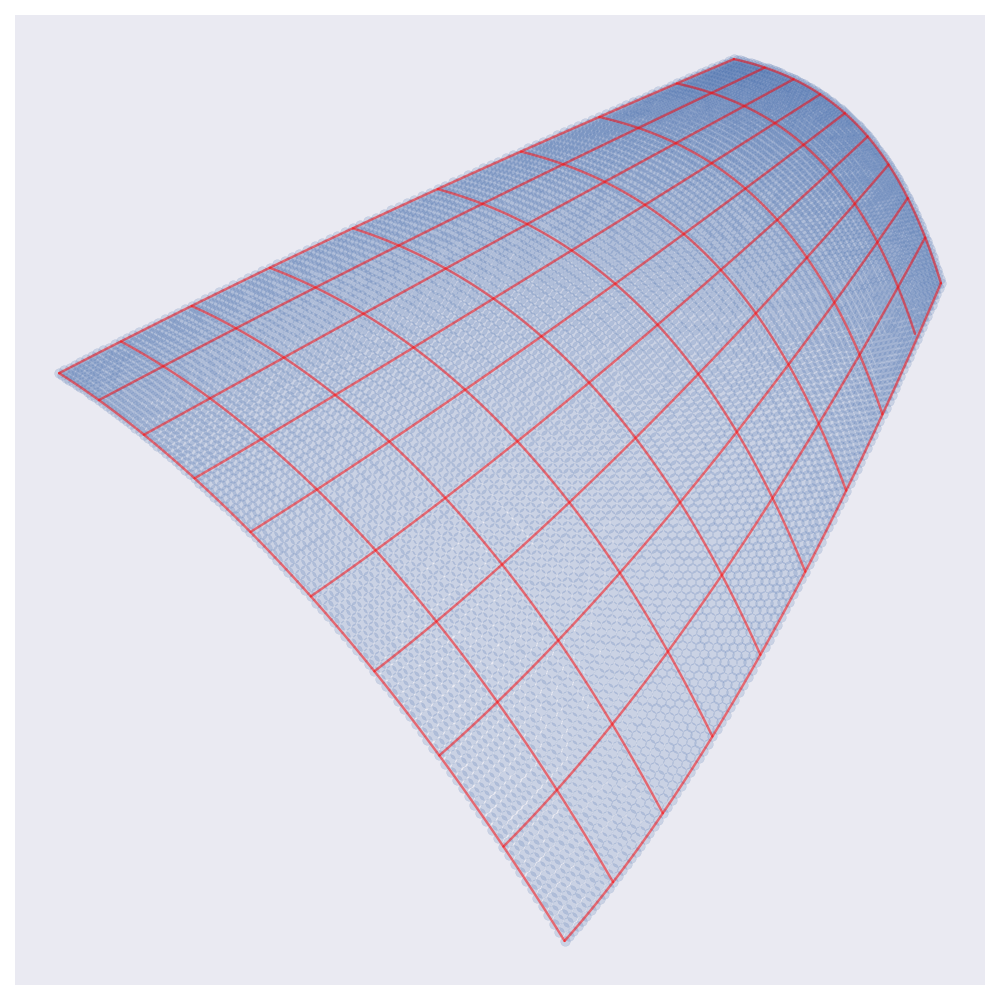}
}
\subfigure[MLP - Iteration 1]{
\includegraphics[width=0.22\linewidth]{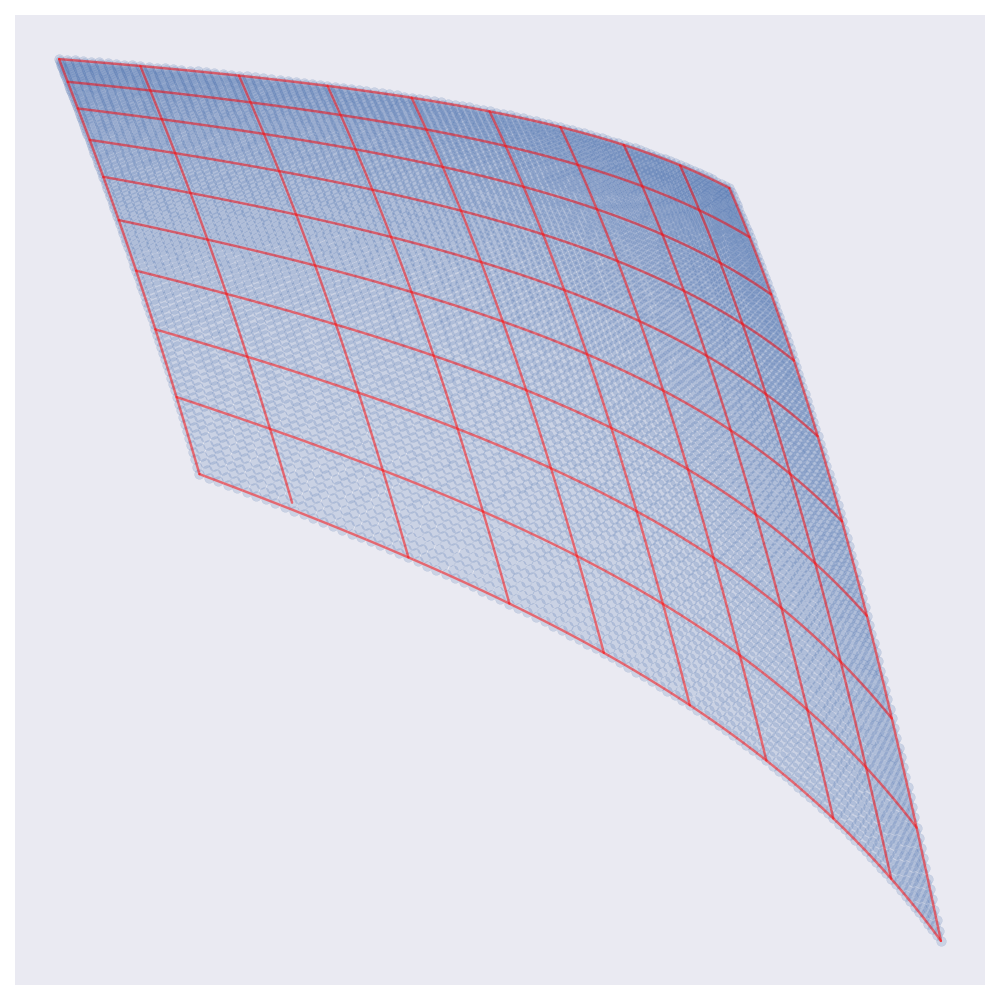}
}
\subfigure[MLP - Iteration 3]{
\includegraphics[width=0.22\linewidth]{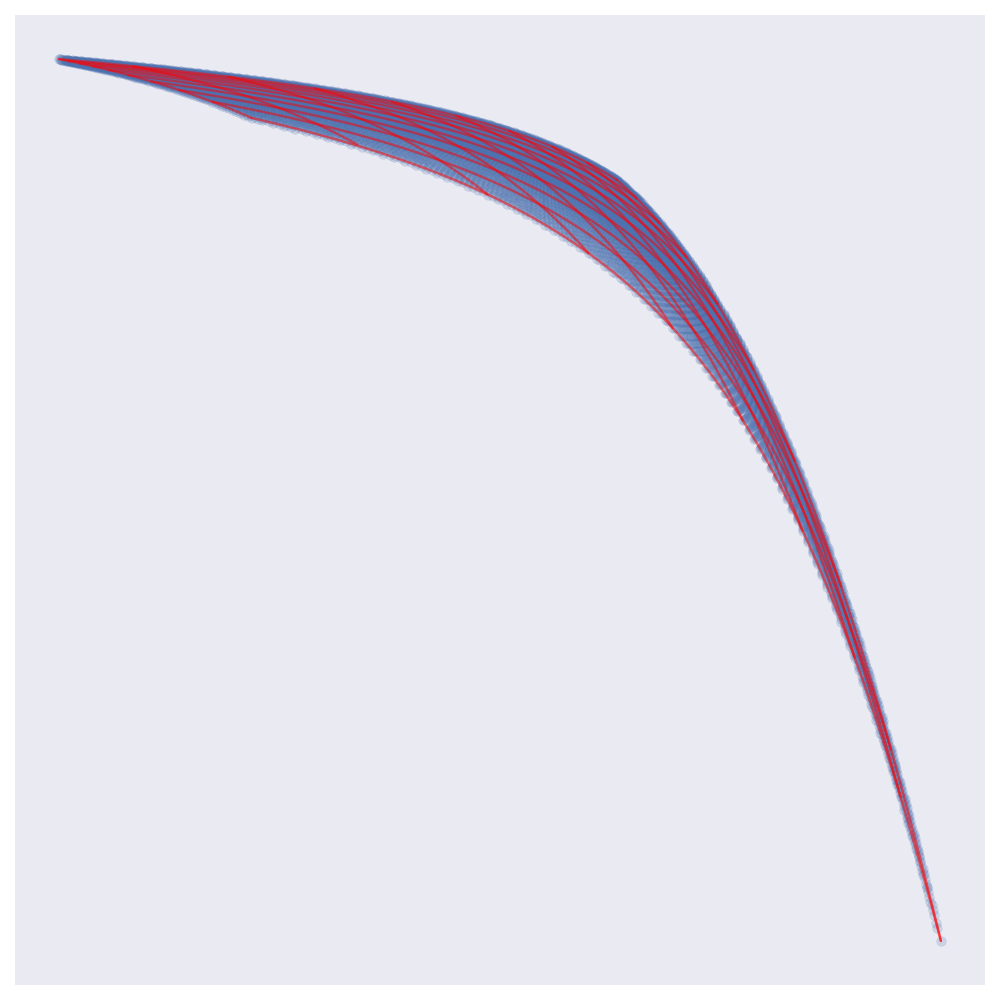}
}
\caption{Results of the nonlinear mixing techniques proposed in \Citep{brakel2017learning} on a normalized synthetic lattice data. Post nonlinear mixing model~(PNL) introduced only slight nonlinearities, which are not hard enough to solve even for the linear algorithms. On the other hand, the multi-layer perceptron mixing (MLP) technique collapses after just couple of iterations.} 
\label{lattice_anica}
\end{figure}

\begin{figure}
\centering
\subfigure[Iteration 0]{
\includegraphics[width=0.22\linewidth]{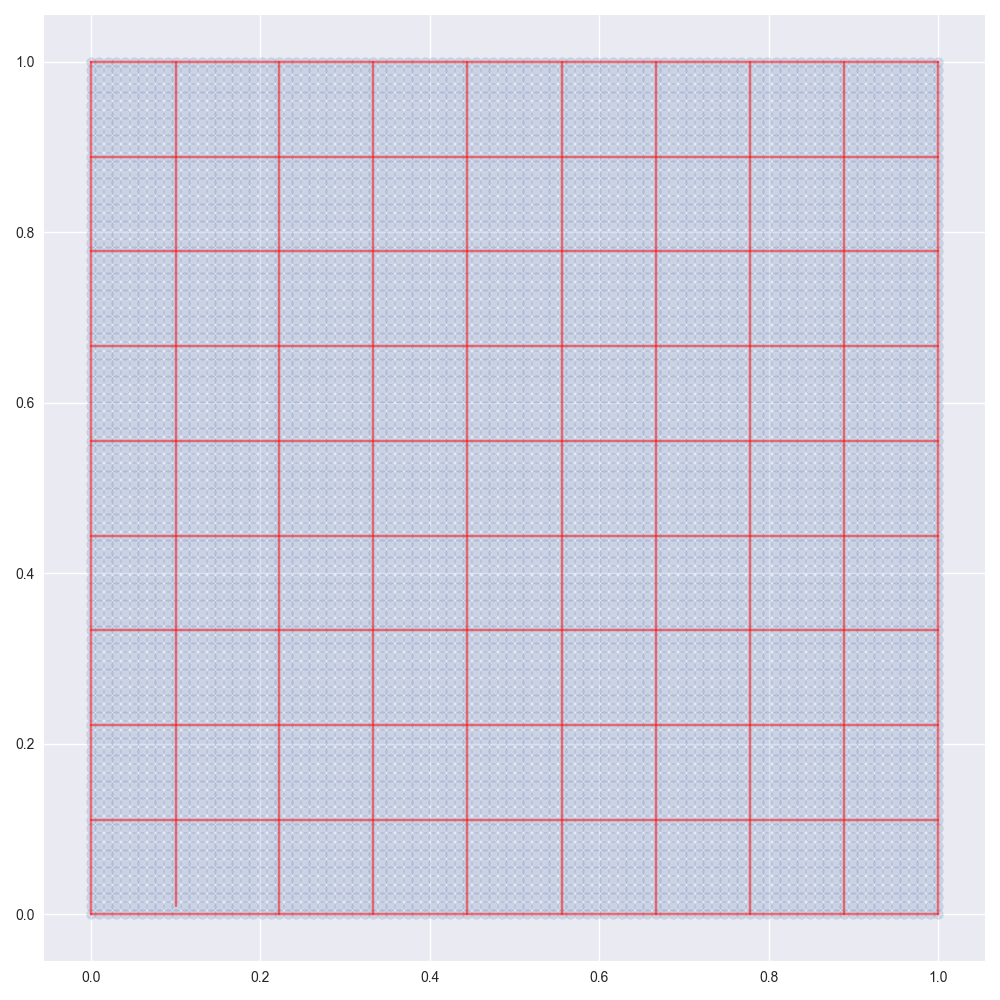}
}
\subfigure[Iteration 10]{
\includegraphics[width=0.22\linewidth]{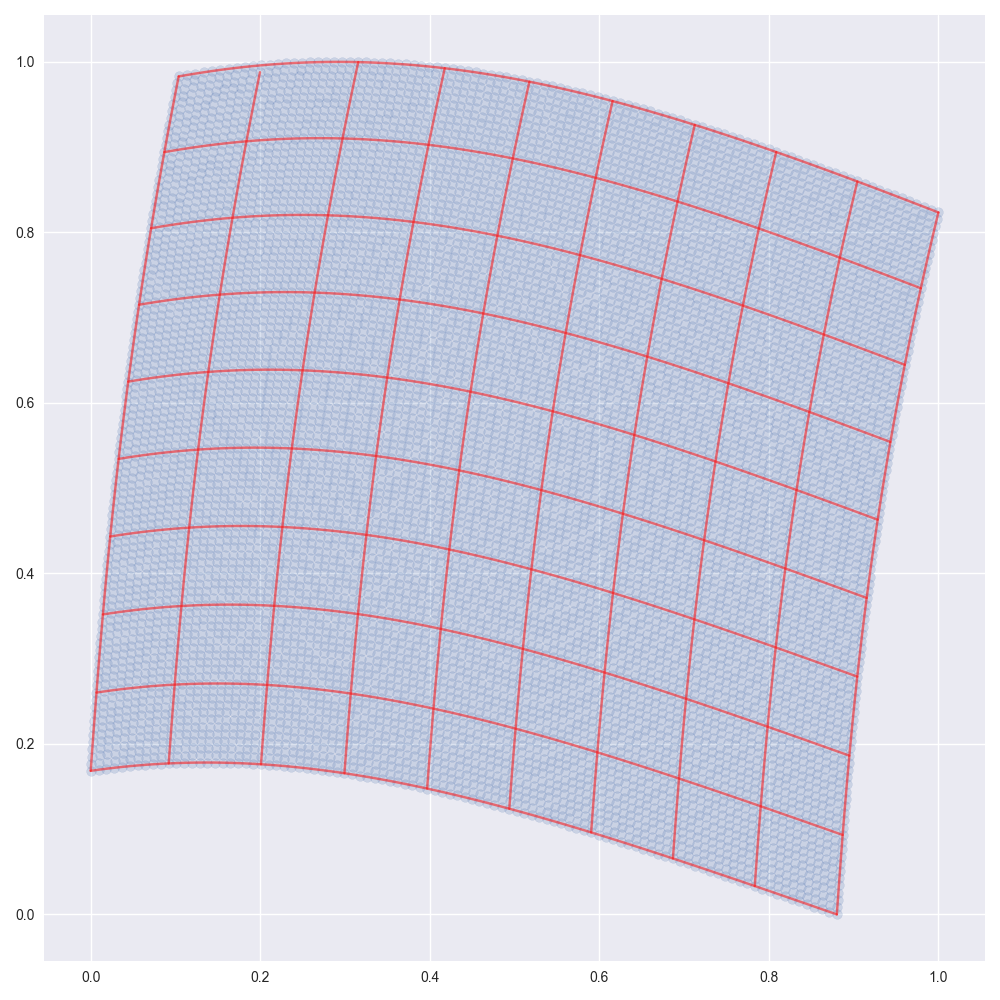}
}
\subfigure[Iteration 20]{
\includegraphics[width=0.22\linewidth]{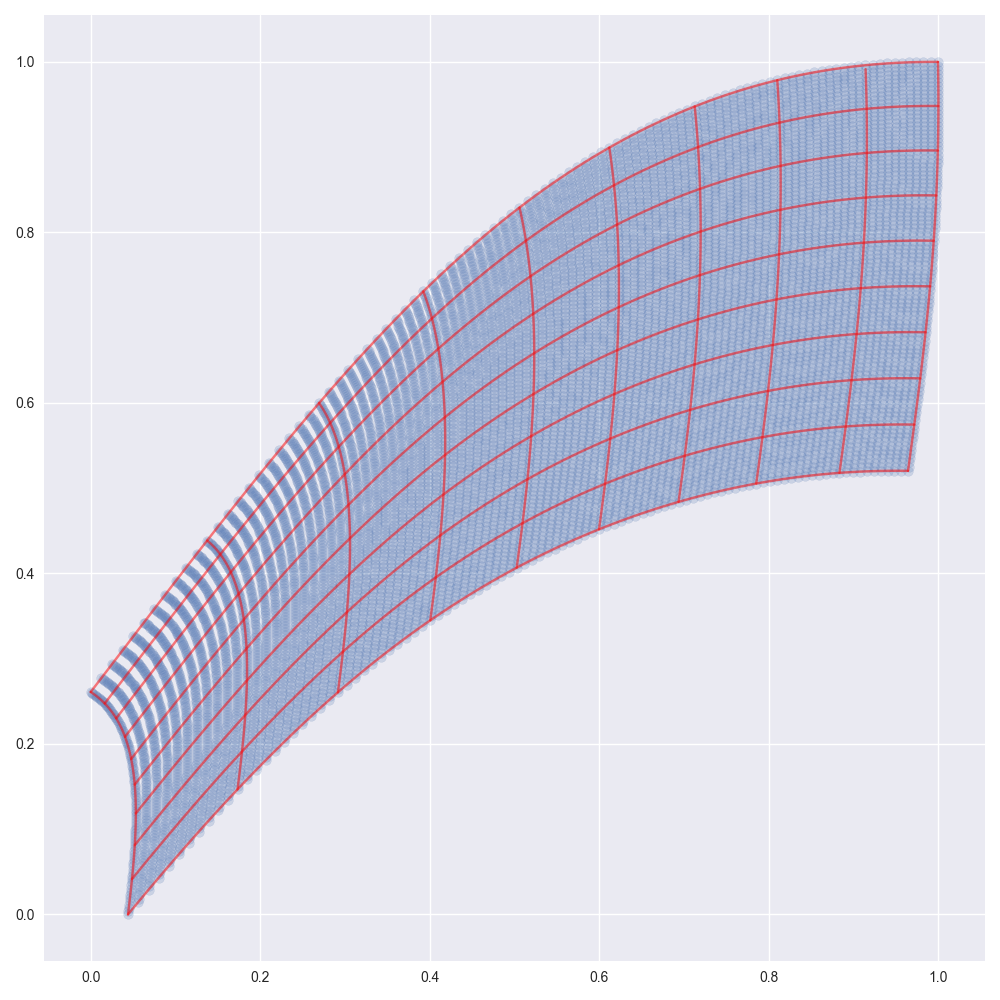}
}
\subfigure[Iteration 30]{
\includegraphics[width=0.22\linewidth]{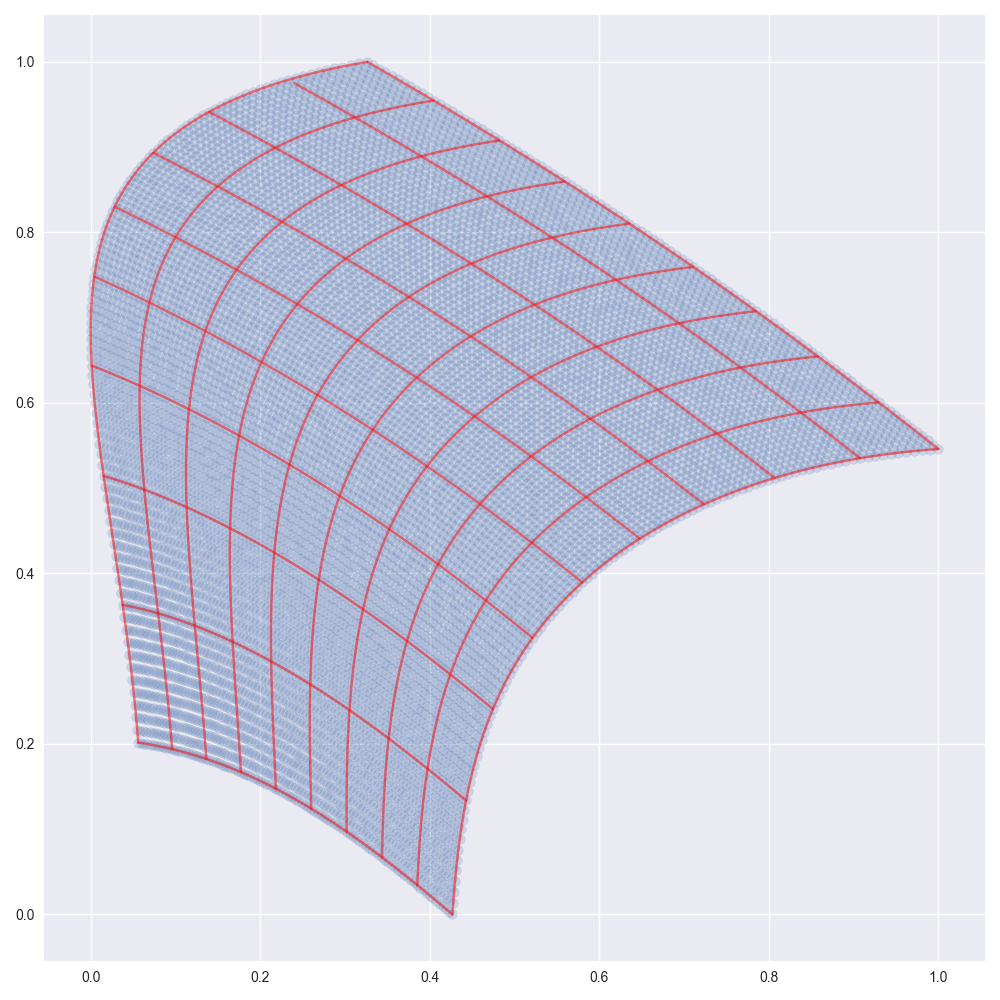}
}
\subfigure[Iteration 40]{
\includegraphics[width=0.22\linewidth]{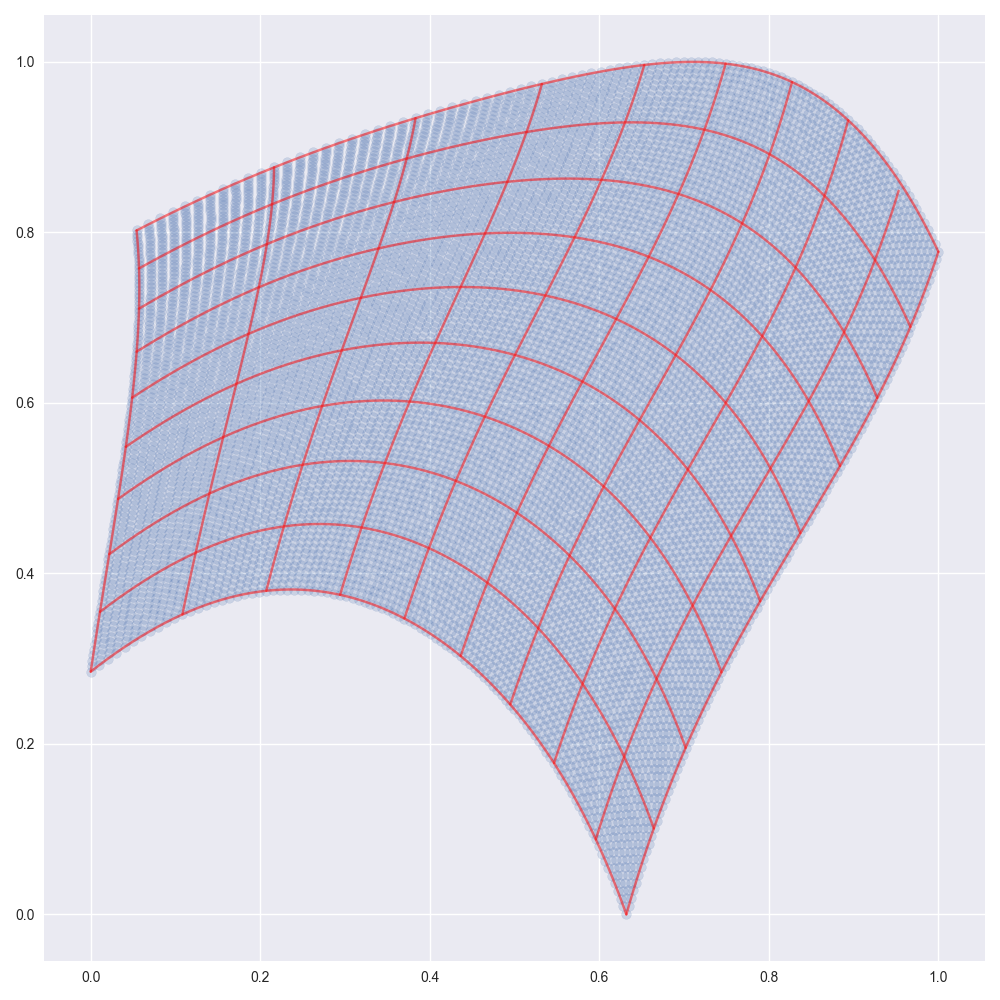}
}
\subfigure[Iteration 50]{
\includegraphics[width=0.22\linewidth]{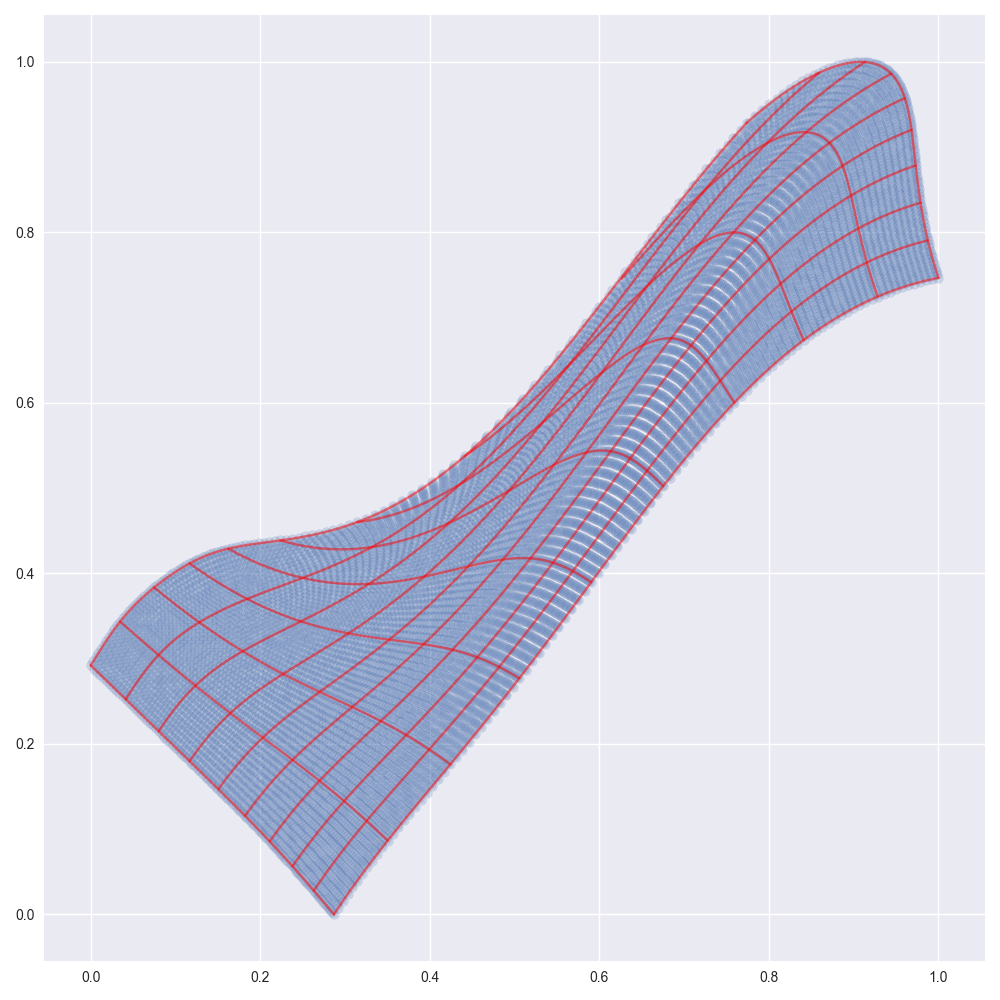}
}
\subfigure[Iteration 60]{
\includegraphics[width=0.22\linewidth]{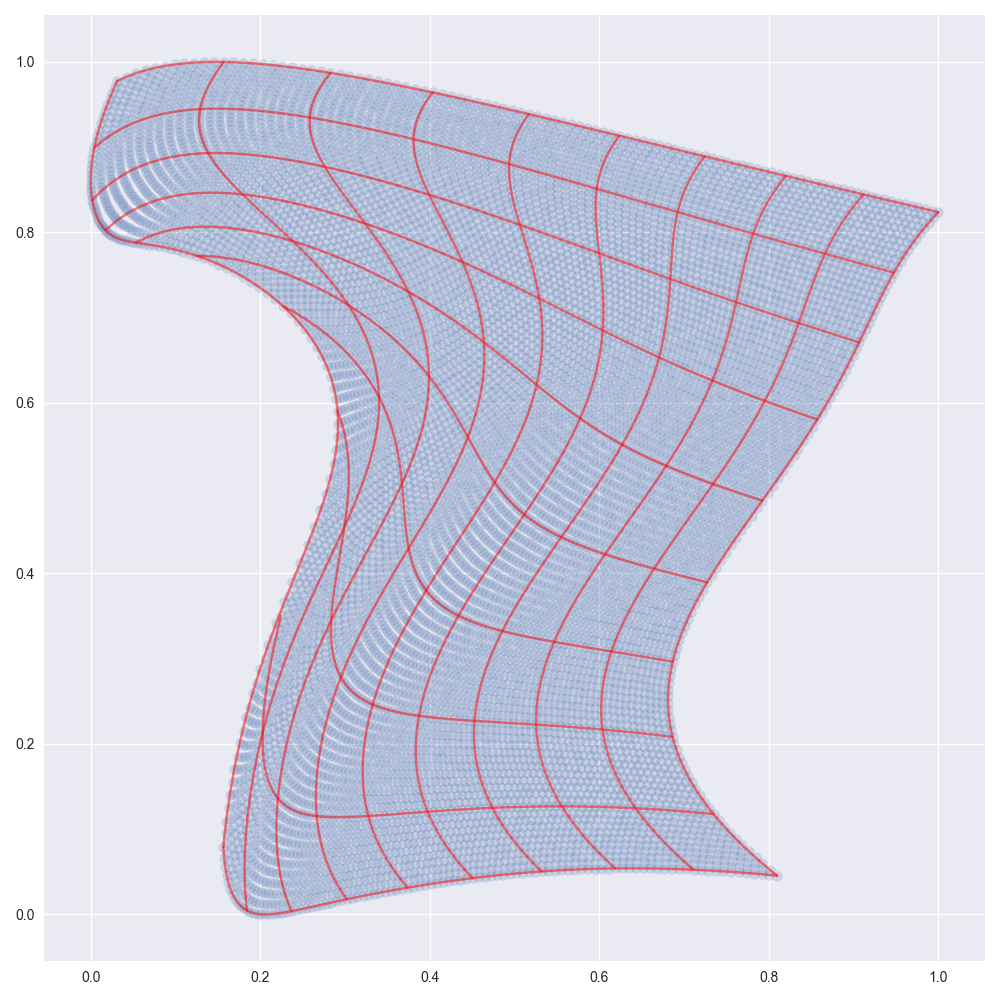}
}
\subfigure[Iteration 70]{
\includegraphics[width=0.22\linewidth]{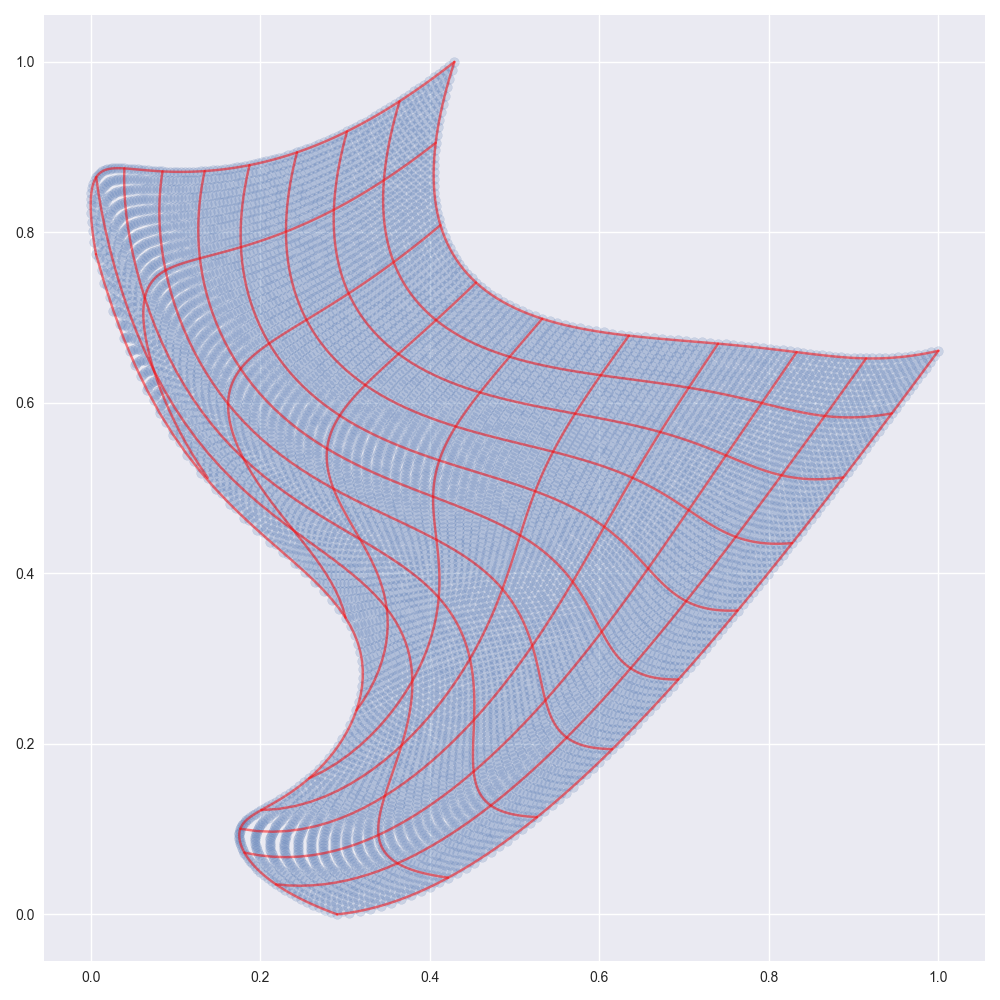}
}
\caption{Results of our proposition of mixing over normalized synthetic lattice data. One may observe that after multiple iterations of the proposed mixing, results become highly nonlinear but not degenerate into any obscure solutions known from previous setup.
}\label{lattice}
\end{figure}


Because of aforementioned disadvantages we propose our own mixing, inspired by \Citep{glow, dinh2014nice} network architecture. 
Let $S$ be a sample of~vectors with independent components. We apply a random isometry on $S$, by taking $X=\left(UV^T\right)S$, where $UV^T$ comes from the Singular~Value~Decomposition on a random matrix $A_{ij}\sim \nor(0,1)$. Next we split $X \in \R^d$ into half
\begin{equation*}
        (x_i,x_j) \to \left(x_i,x_j+\phi(x_i)\right) \label{eq:nonlin_mixing},
\end{equation*}

similarly as it was done in \Citep{glow}. Function $\phi$ is a randomly initialized neural network with two hidden layers and $\tanh$ activations after each of them. This approach can be iterated over multiple times to achieve the desired level of nonlinear mixing.


Mixing procedure can be described in an algorithmic way:

\begin{algorithm}[H]  
\caption{Nonlinear mixing}
\begin{flushleft}
Take dataset $S_0$.
\end{flushleft}
\begin{enumerate}
    \item Take random isometry:
    \begin{enumerate}
        \item Take $A$ from $\nor{(0,\I)}$, such that
        $a_{ij} \sim \nor{(0,1)}$
        \item Take SVD of $A$, such that $A=U\Sigma V^T$
        \item Return $UV^T$ 
    \end{enumerate}
    \item Take $X=\left(UV^T\right)S_0$
    \item Split $X \in \R^D$ in half:
    $$
    (x_i,x_j) \to \left(x_i,x_j+\phi(x_i)\right)
    $$
    where $\phi$ is a randomly initialized neural network and $x_i, x_j$ come from the split of~$X$ into half.
    \item Return $X_1=AX$
\end{enumerate}
\end{algorithm}\label{alg:nonlin_mixing}
One can easily increase the number of mixes and interlude splits of~$X$ in~reverse
order so that $(x_i,x_j) \to \left(x_i+\phi(x_j), x_j\right)$ for even and \break $(x_i,x_j) \to \left(x_i,x_j+\phi(x_i)\right)$ for odd iterate. The effects of applying the proposed mixing to two-dimensional data are presented in Fig.~\ref{lattice}.

Our mixing procedure scales well in higher dimensions by iterating over the splits in $\R^d$. Additionally, it is also easily invertible, therefore there is a guarantee that the source components may be~retrieved. 

\begin{figure*}[!h]

\centering
  \begin{tabular}[t]{cc}
    2 dimensions & 4 dimensions \\ 
  \includegraphics[width=0.49\linewidth]{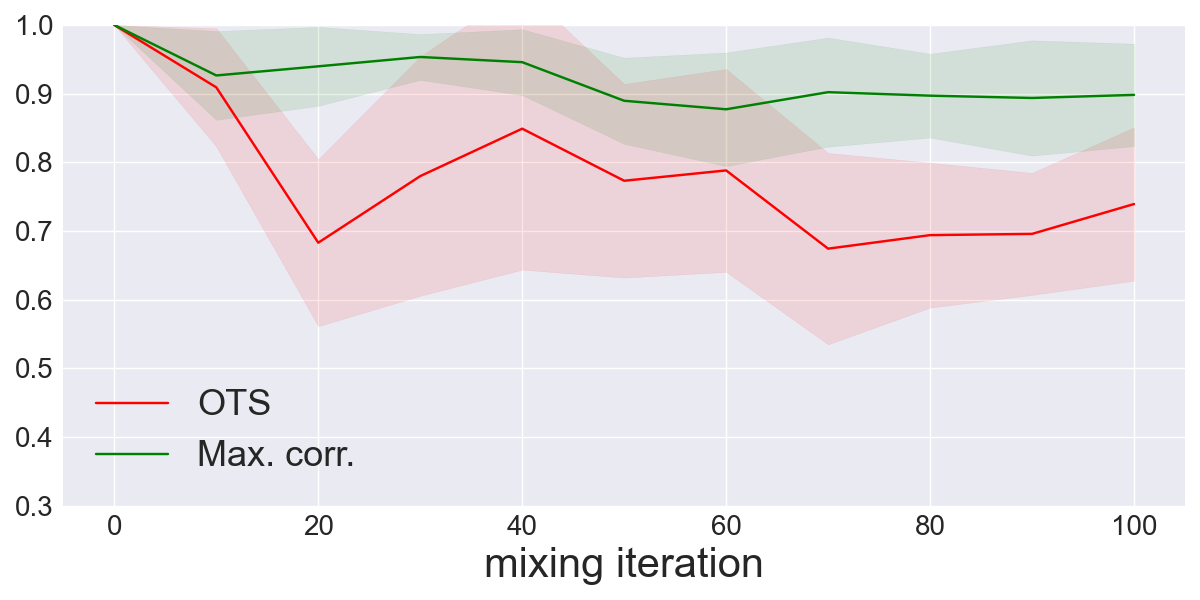}
  &
\includegraphics[width=0.49\linewidth]{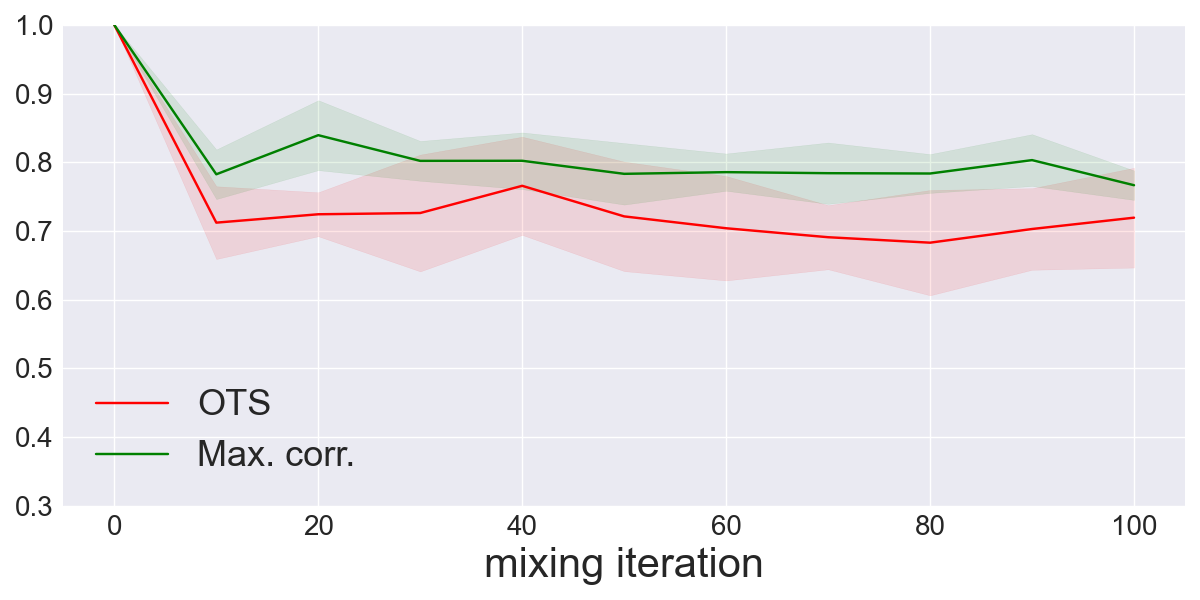}
 \\
 6 dimensions & 8 dimensions \\
\includegraphics[width=0.49\linewidth]{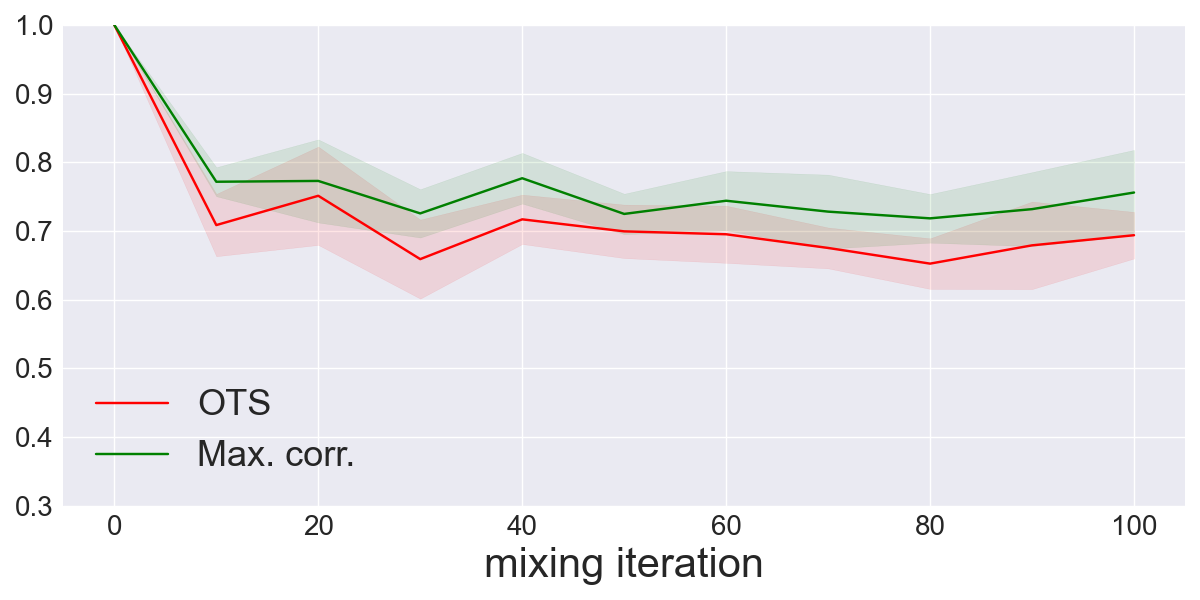} 
&
\includegraphics[width=0.49\linewidth]{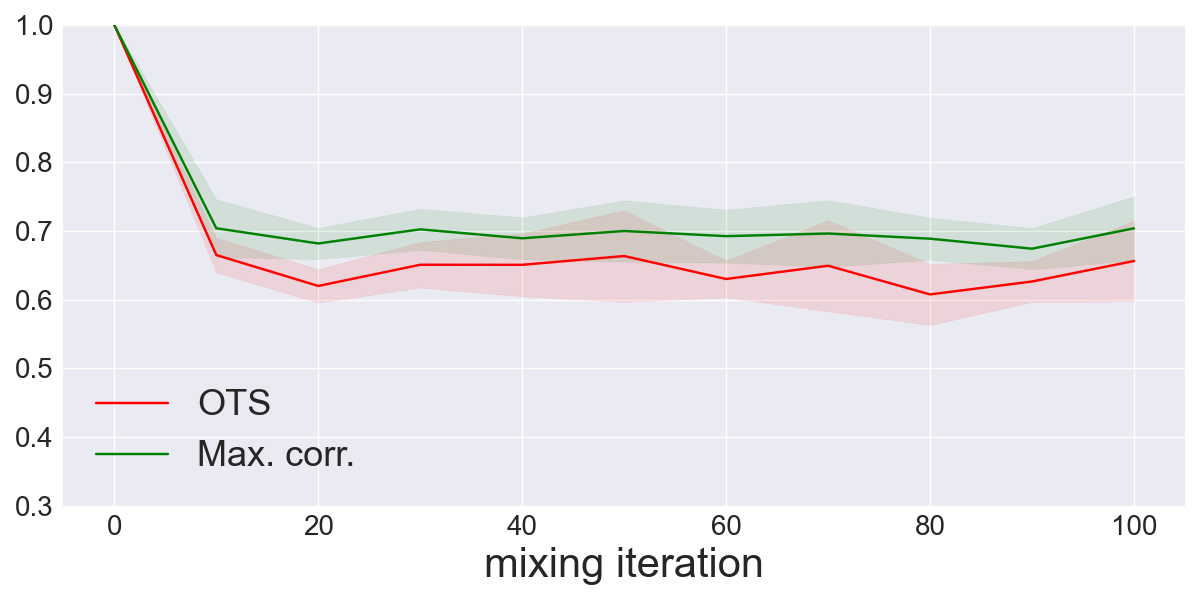}
\end{tabular}
\caption{Results of the experiment where in $n$--dimensional mixed observation one component was swapped with a randomly chosen source signal. One may observe that $max\_corr$ almost always prefers such situation, while \mixname{} seems to be more rigorous. 
}\label{spearmanVSmaxcorr}
\end{figure*}

\begin{figure*}[!h]
\centering
  \begin{tabular}[t]{cc}
    2 dimensions & 4 dimensions \\ 
  \includegraphics[width=0.49\linewidth]{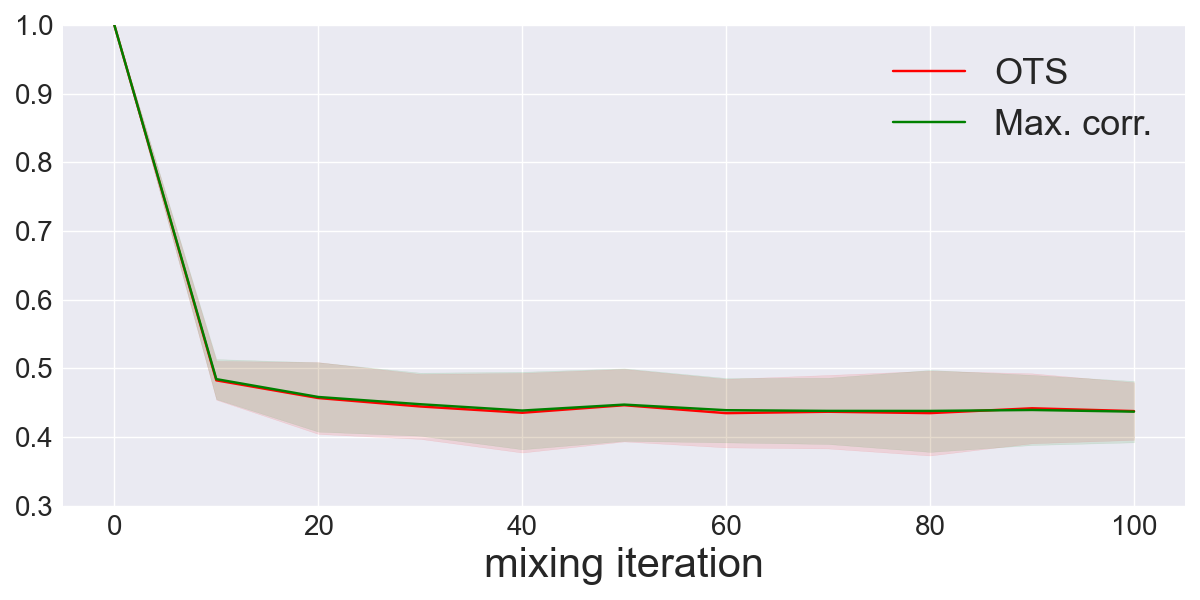}
  &
\includegraphics[width=0.49\linewidth]{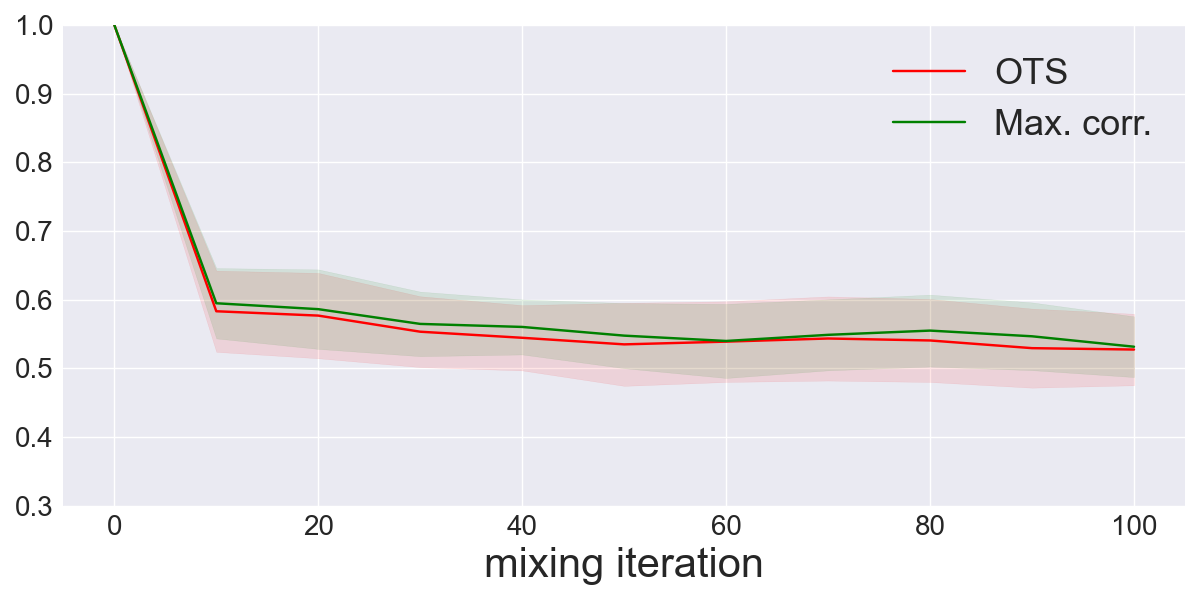}

 \\
 6 dimensions & 8 dimensions \\
\includegraphics[width=0.49\linewidth]{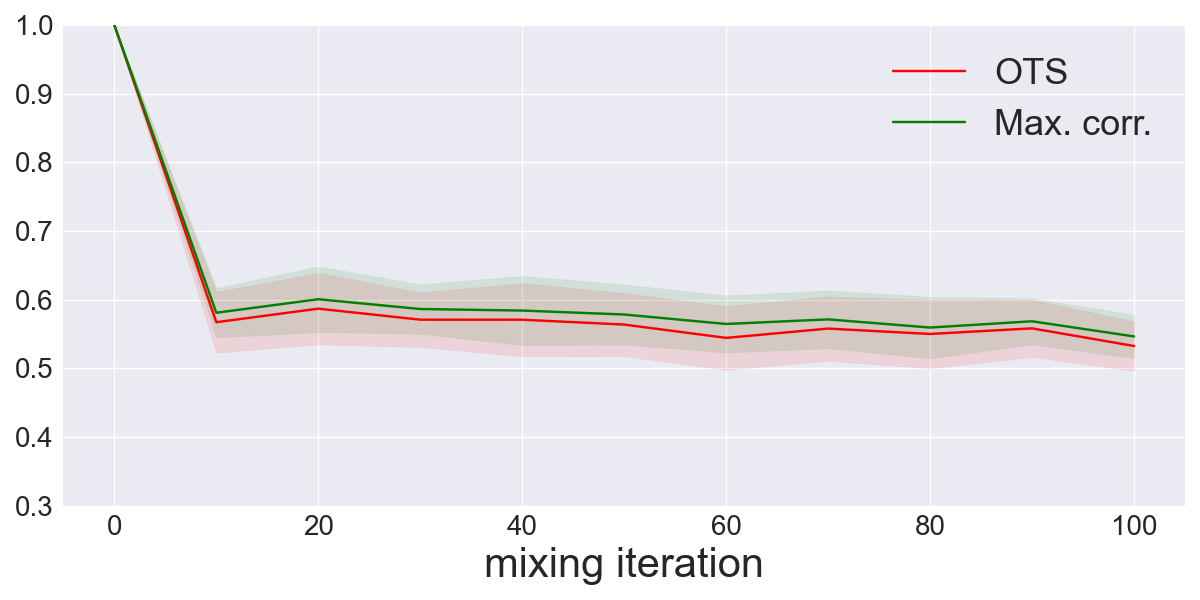}
 &
  \includegraphics[width=0.49\linewidth]{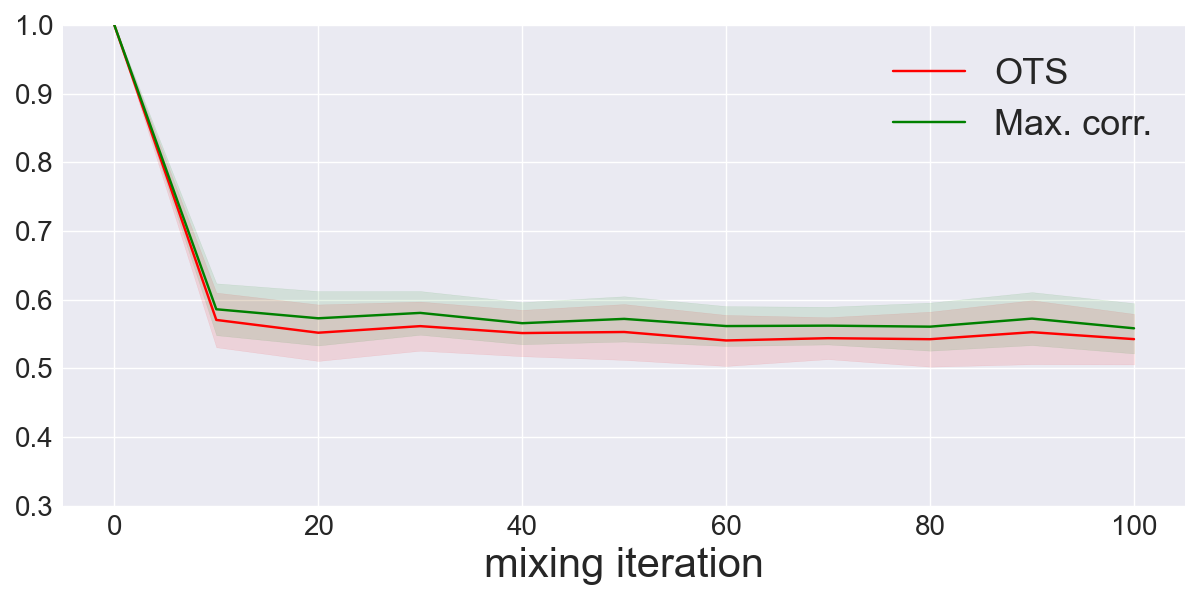}
  \\
\end{tabular}

\caption{Results for the \mixname{} and $max\_corr$ values for fully mixed dataset. One may observe that both measures in this case give similar outcomes.
}\label{fullMixing}
\end{figure*}

\section{Optimal Transport Spearman measure} 
\label{sec:OTS}

For the~benchmark experiments we want to be able to measure the~similarity between the obtained results $Z$ and the original sources~$S$. In the case of linear mixing the common choice is the maximum absolute correlation over all possible permutations of the signals (denoted hereafter as $max\_corr$ \Citep{hyvarinen2016unsupervised,hyvarinen2017nonlinear, hyvarinen2019nonlinear,spurek2020non, pnlmisep,bengio2013representation, hyvarinen1999fast}).

However, this measure is based on the Pearson's correlation coefficient and therefore is not able to catch any high order dependencies. To address this problem we introduce a new measure based on the~nonlinear Spearman's rank correlation coefficient and optimal transport. 

Let the $Z$ denote the signal retrieved by an ICA algorithm and let the  $r_s\left(z^j, s^k\right)$ be the Spearman's rank correlation coefficient between the $j$-th component of $Z$ and $k$-th component of $S$. We define the~Spearman's distance matrix $M(Z,S)$ as
$$
M(Z,S)=\left[1-\left|r_s(z^j,s^k)\right|\right]_{j,k=1,2,\ldots d},
$$
where the zero entries indicate a monotonic relationship between the~corresponding features. 

This matrix is then used as the transportation cost of the components. Formally, we compute the value of the optimal transport problem formulated in terms of integer linear programming:
$$     
     \mixname = 1 - I_s(Z,S),
     $$
     $$
     I_s(Z,S) = \min_{\gamma} \frac{1}{D}\sum_{j,k} \gamma_{j,k}M(Z,S)_{j,k},
$$
subject to:
$$\sum_{k}^{d}\gamma_{j,k} = A_j\mathrm{\ for\ all\ }j\in\{1,2,\ldots,d\}, $$
$$ \sum_{j}^{d}\gamma_{j,k} = A_k\mathrm{\ for\ all\ }k\in\{1,2,\ldots,d\},$$
$$ \gamma_{j,k}\in \{0,1\}\mathrm{\ for\ all\ }j,k\in\{1,2,\ldots,d\},  $$
where $A_j=A_k=1$. 

As a result of the last constraint, the obtained transport plan~$\gamma$~defines a one-to-one map from the retrieved signals~to~the~original sources. In addition, the proposed Spearman-based measure ($\mixname{}$) is sensitive to monotonic nonlinear dependencies and also relatively easy to compute with the use of existing tools for integer programming.

Another difference between \mixname{} and  $max\_corr$ is that the latter favors stronger disentanglement of few components, while \mixname{} gives lower results for outcomes that decompose the observation more equally. In other words consider an experiment in which $n$ signals were mixed. Further, assume that some (nonlinear) ICA algorithm failed to unmix all but one component (i.e. only one unmixed component matches exactly one source signal, while the rest is still highly unrecognizable). In such situation the $max\_corr$ value will be significantly higher than \mixname{}, although only the small portion of the base dataset was recovered.

\begin{figure}[!h]
\centering
\rotatebox{90}{\hskip 0.1in Original }
\includegraphics[width=0.25\linewidth]{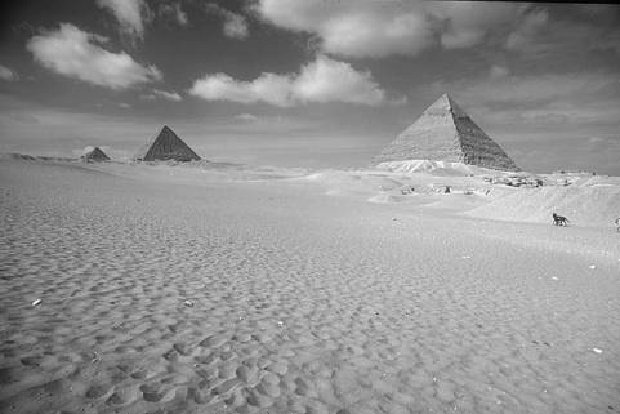}
\includegraphics[width=0.25\linewidth]{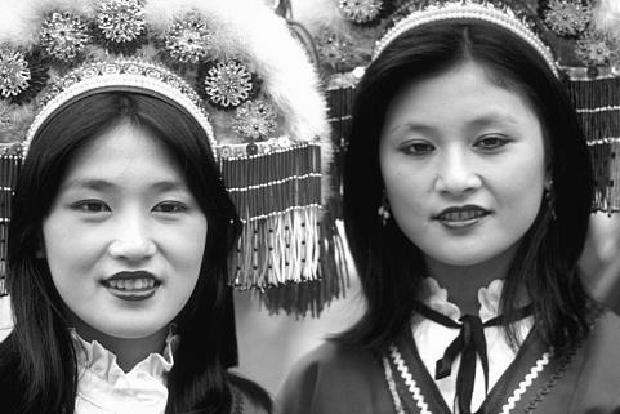}
\includegraphics[width=0.25\linewidth]{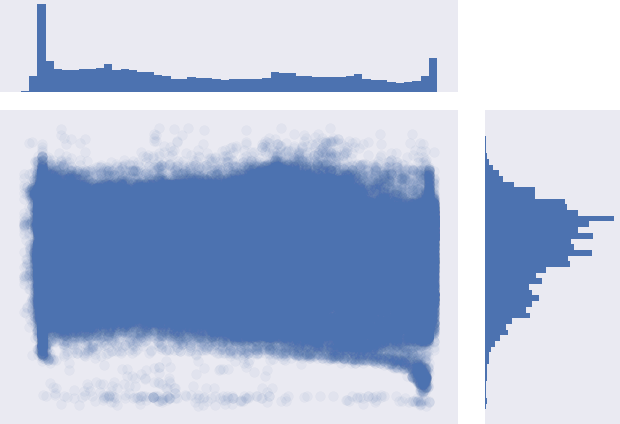}\\
\rotatebox{90}{\hskip 0.1in Mixed}
\includegraphics[width=0.25\linewidth]{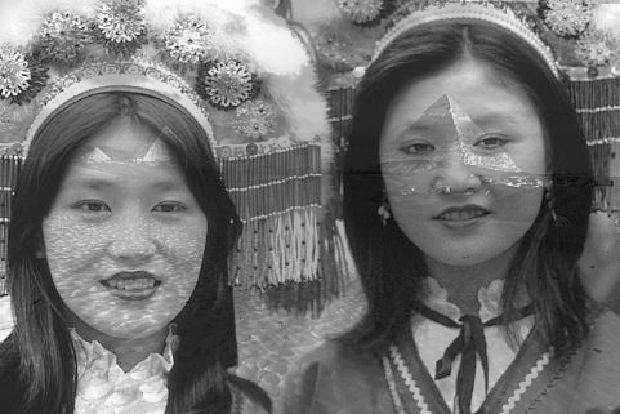}
\includegraphics[width=0.25\linewidth]{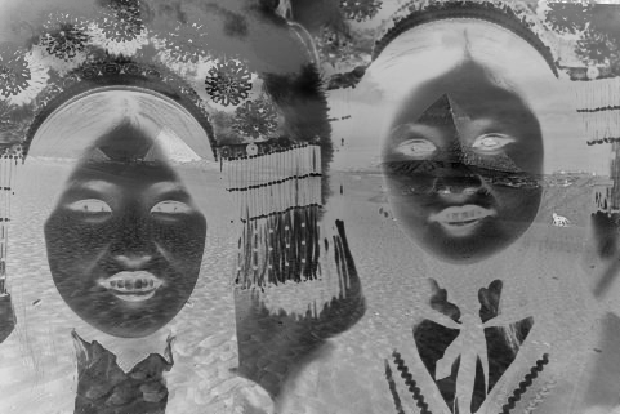}
\includegraphics[width=0.25\linewidth]{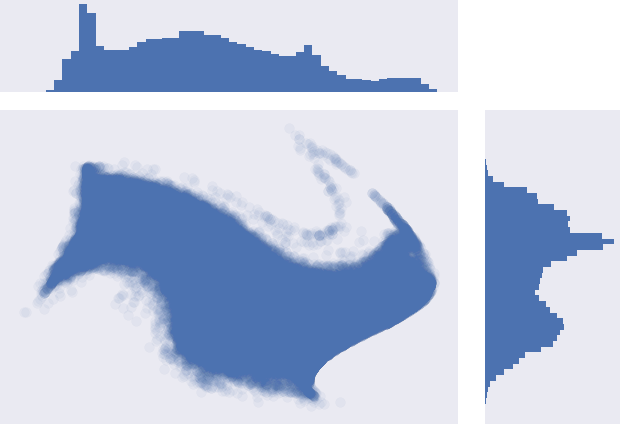}\\
\rotatebox{90}{\hskip 0.1in FastICA}
\includegraphics[width=0.25\linewidth]{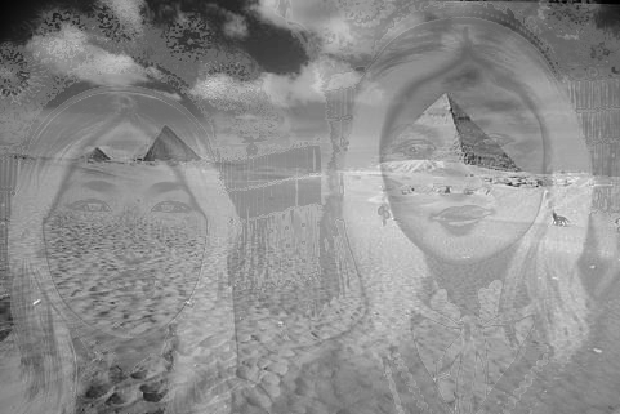}
\includegraphics[width=0.25\linewidth]{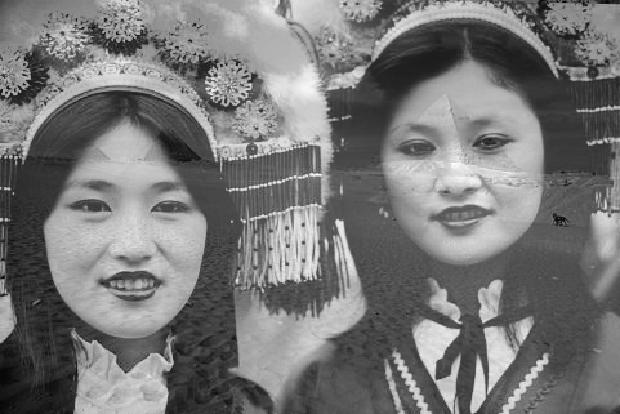}
\includegraphics[width=0.25\linewidth]{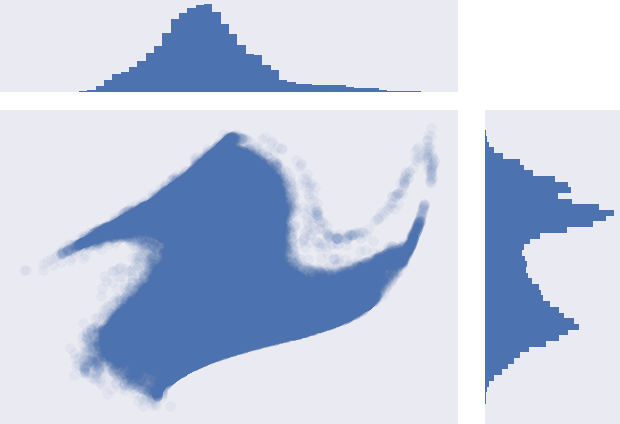}\\
\rotatebox{90}{\hskip 0.1in ANICA}
\includegraphics[width=0.25\linewidth]{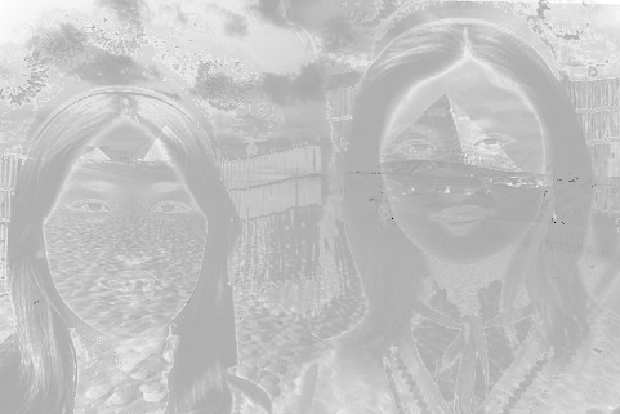}
\includegraphics[width=0.25\linewidth]{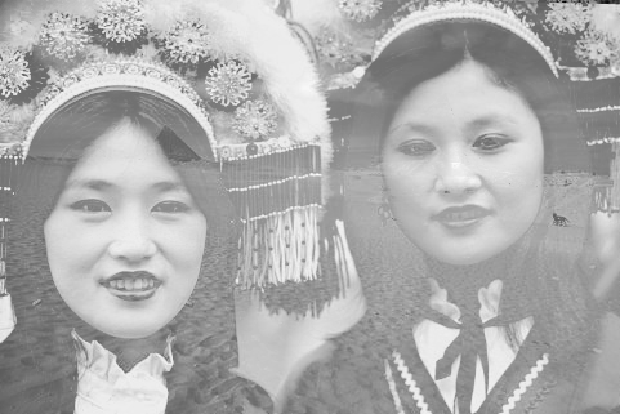}
\includegraphics[width=0.25\linewidth]{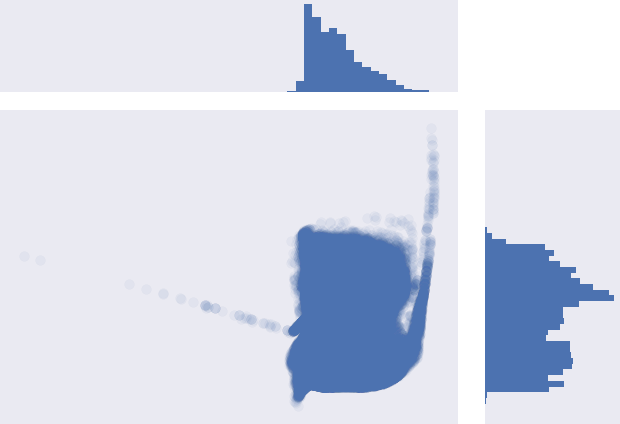}\\
\rotatebox{90}{\hskip 0.2in  PNLMISEP}
\includegraphics[width=0.25\linewidth]{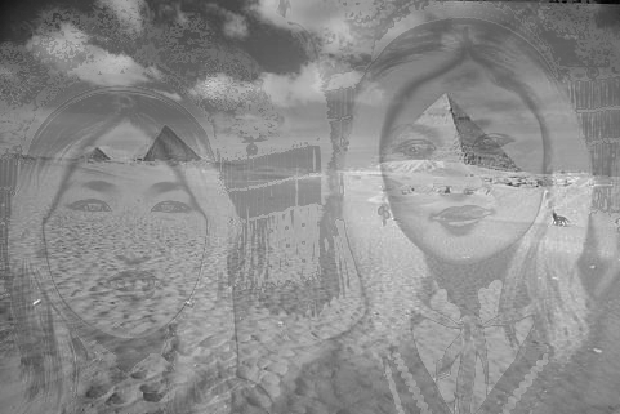}
\includegraphics[width=0.25\linewidth]{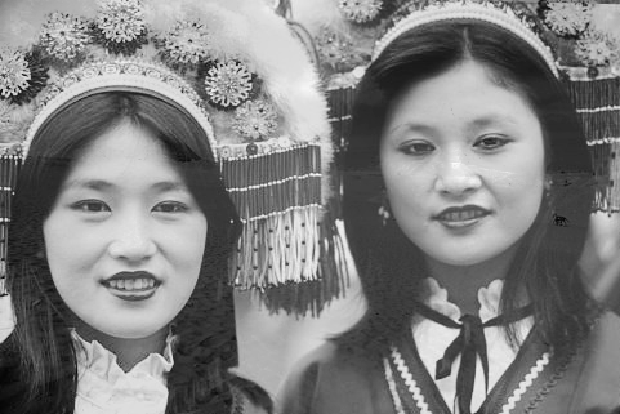}
\includegraphics[width=0.25\linewidth]{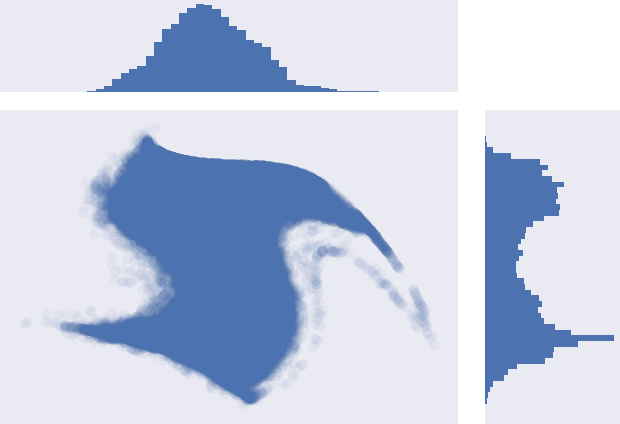}\\
\rotatebox{90}{\hskip 0.1in dCor}
\includegraphics[width=0.25\linewidth]{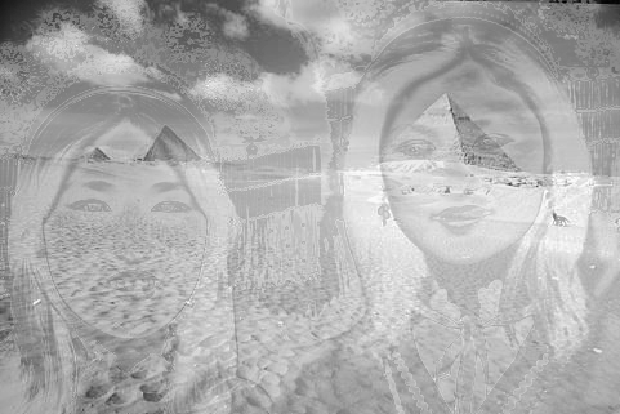}
\includegraphics[width=0.25\linewidth]{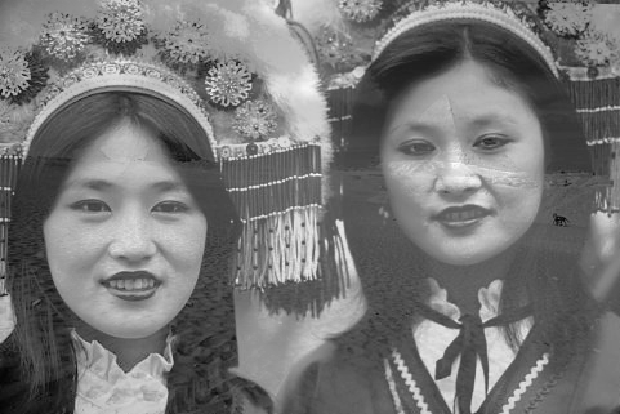}
\includegraphics[width=0.25\linewidth]{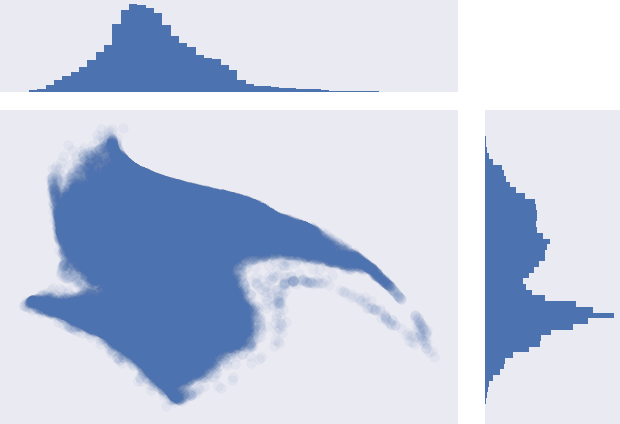}\\
\rotatebox{90}{\hskip 0.1in \wica{}}
\includegraphics[width=0.25\linewidth]{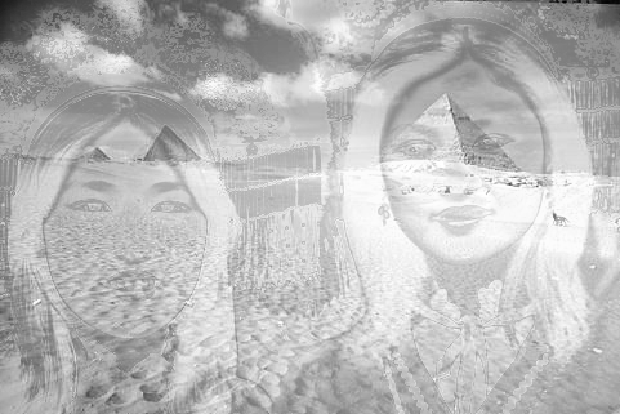}
\includegraphics[width=0.25\linewidth]{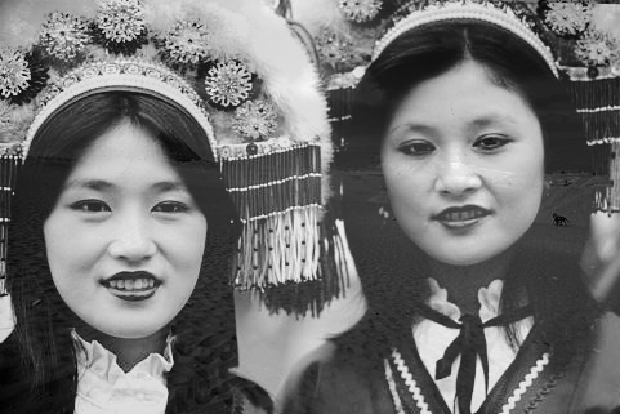}
\includegraphics[width=0.25\linewidth]{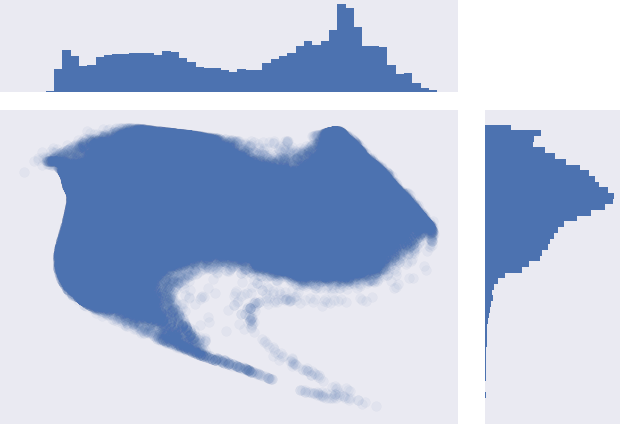}\\
\caption{Two dimensional example of the problem of unmixing natural images. One can easily spot that \wica{} has the smallest amount of artifacts remained after retrieving the signals. All of the scatter plots were normalized and are presented in the same scale. It is valuable to also look at the attached marginal histograms, where some of the similarities between the original signal and its retrieved counterpart may be observed.
}\label{fig:2d-nonlinear_1}
\end{figure}

In order to empirically demonstrate this property, we artificially mixed a multidimensional grid using the mixing function from Section \ref{sec:mixing}. Next, we randomly swapped one of the mixed signals with the original signal from the base dataset. We compared this \textit{mixed-and-swapped} data to the source signals using $max\_corr$ and \mixname{}. The results over different mixing iterations are presented on Fig. \ref{spearmanVSmaxcorr}. One may observe that $max\_corr$ values are always above the \mixname{} ones, suggesting that $max\_corr$ measure prefers such a recovery more than \mixname{}. Naturally, in the case when all signals are far different from the true sources, values for $max\_corr$ and $\mixname{}$ are almost exactly the same (see Fig. \ref{fullMixing}).

In consequence, the $max\_corr$ measure can help to asses the maximum of informativeness from the retrieved signal. This can be desired in situations that favor well decomposition of few components at the cost of lower correlatedness of the remaining ones (which may happen, for instance, in denoising problems). In the case where approximately equal recovery of all the signals is requested, the \mixname{} measure would be a better choice.  

\section{Experiments} \label{sec:experiments}


In this section we show several simulated experiments to validate the \wica{} algorithm empirically. Because there is no clear benchmark definition for the nonlinear ICA evaluation, we have selected most figurative and easily interpretable setup which we present in the following subsections. In addition, we performed the analysis of electroencephalographic (EEG) signal according to procedure presented in \Citep{icaEEG,Onton_2006}, to validate our method in more natural setting, that is, without artificially generated mixing and access to true source components.




\subsection{Qualitative results} \label{sec:imageseparation}

We start from the simulated example of the ICA application in the case of images separation problem. We use this regime because the results can be understand with the naked eye of a reader.

To construct this experiment one needs to apply some artificial mixing function (i.e. linear transformation or mixing function from Section \ref{sec:mixing}) on the independent source signals. Such mixture is then passed to the ICA model in question to perform the unmixing task. 


In order to compare the \wica{} algorithm to other nonlinear ICA approaches we evaluated the models performance in the case of separation of artificially mixed images. As an initial setup for this blind source separation task, we randomly sampled two flattened images from the Berkeley Segmentation Dataset \Citep{MartinFTM01}\footnote[4]{available at \url{https://www2.eecs.berkeley.edu/Research/Projects/CS/vision/bsds/BSDS300/html/dataset/images.html}} and mixed them using the function defined in Section \ref{sec:mixing}. We compared the proposed our method with dCor~\Citep{spurek2020non}, PNLMISEP~\Citep{pnlmisep}, ANICA~\Citep{bengio2013representation} and linear FastICA. Results of this toy example are presented on Fig.~\ref{fig:2d-nonlinear_1}.

Besides the retrieved images and their scatter plots, we also demonstrated projection of marginal densities. The desired goal is to achieve similar images and marginal densities as in the source (original) pictures. 

One can easily spot that FastICA and dCor seem to only rotate the mixed signals. The ANICA, on the other hand, transformed the observations to a high extent, but the recovered signals are visually worse than the original pictures. Similarly to previous algorithm, PNLMISEP and \wica{} also performed some nontrivial shift on the marginal densities, but in this case the retrieved densities resemble the original ones more naturally. 

This experiment was fully qualitative and the outcome is subject to one’s individual perception. We demonstrated the images purely as a visualization of the different ICA models performance in simple nonlinear setup. We report quantitative results in the next subsection. 

\begin{figure}[!htb]
\centering
\includegraphics[width=0.9\linewidth]{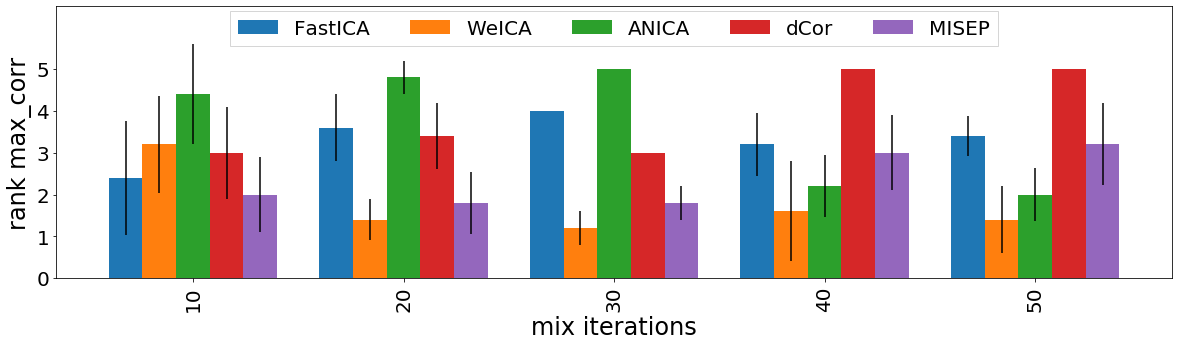}
\\ 
\includegraphics[width=0.9\linewidth]{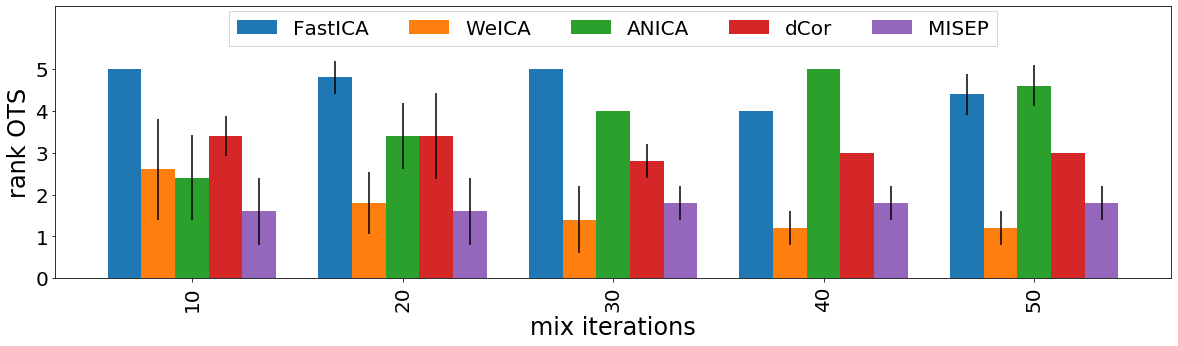}
\caption{The mean rank results for different mixes measured by $max\_corr$ (top) and \mixname{} (bottom). The lower the better. }\label{fig:ranks}
\end{figure}

\begin{figure*}[!h]
\centering
\subfigure[Original EEG signals]{
\includegraphics[width=0.28\linewidth]{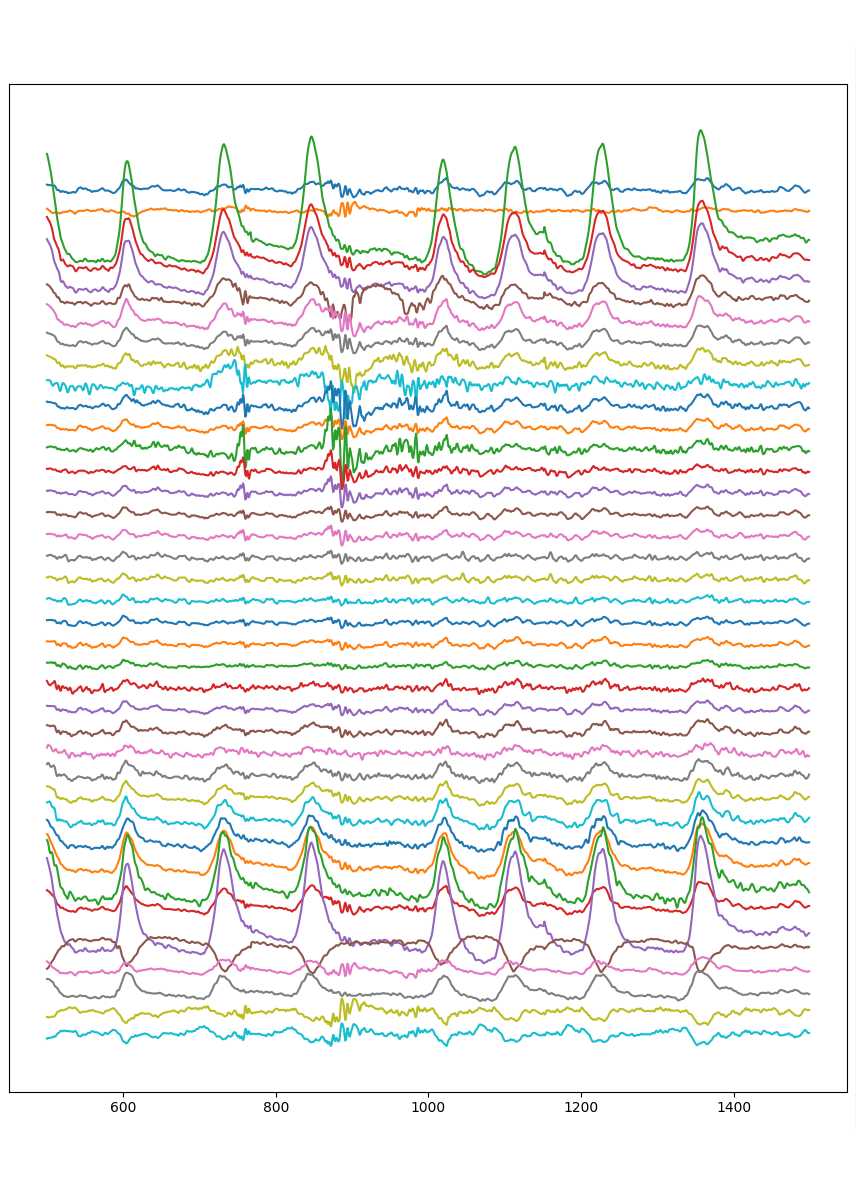}}
\subfigure[Retrieved by \wica{}]{
\includegraphics[width=0.28\linewidth]{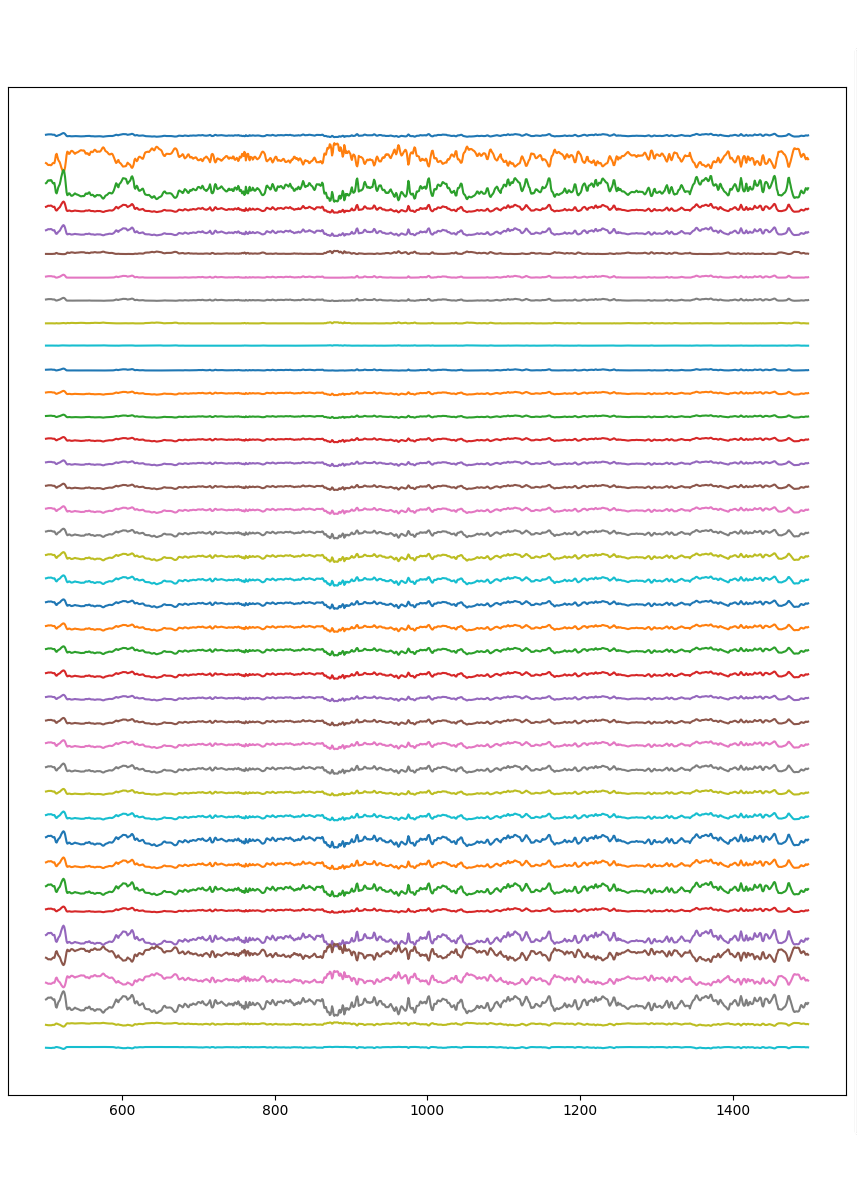}}
\subfigure[Retrieved by FastICA]{
\includegraphics[width=0.328\linewidth]{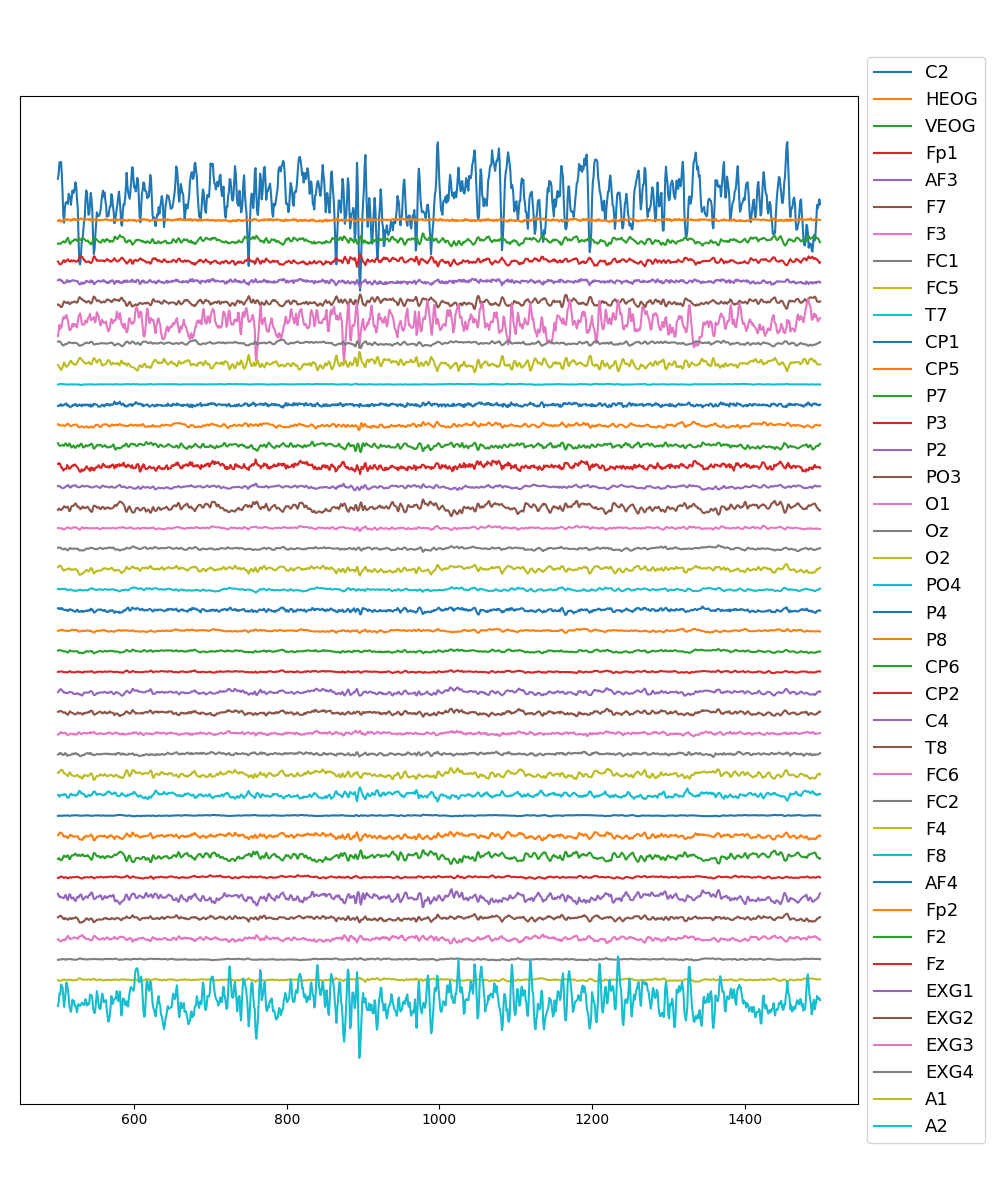}}

\caption{Results of analysis done on the EEG signals. After the deletion of a suspicious signals selected by an expert from the decomposition, one can easily spot that the reconstructed components are more homogeneous, and do not have as much artifacts as the original EEG data. In both methods the same amount of signals was~cleared. The results are satisfying in either of the cases. Additionally \wica{} persist scale of the retrieved signals, which is helpful property in further cleansing of the EEG data.}\label{fig:eeg_retived}
\end{figure*}

\subsection{Quantitative results}

From the preliminary results reported in previous subsection, we moved to a more complex scenario in which we quantitative evaluated the ICA methods in a higher dimensional setup.

We uniformly sampled $d$ flattened images from the Berkeley Segmentation Dataset \Citep{MartinFTM01} to form the source components. We used five different source dimensions $d \in \{2,4,6,8,10\}$. The observations were then obtained by using the function described in Section \ref{sec:mixing}, applied iteratively $i \in \{10, 20, 30, 40, 50\}$ times. For each dimension $d$ we randomly picked $5$ different sets of source images. Every method was evaluated $10$ times on each set of sources, dimensions and mixes. 



We fit each nonlinear algorithm using the grid search over the~learning rate. For the auto-encoder based models we also performed a grid search over the scaling of the independence measure. Adjustment of these hyper-parameters was done on randomly sampled observations from the set of all obtained mixtures. 
Examples used to tune the architectures, were then excluded from the dataset on which we performed the actual evaluation. It is worth to mention that we had to fix batch size to $256$, because any bigger value caused instabilities in the ANICA results. To be fair in comparisons, we set the same neural net architecture for \wica{}, ANICA and dCor. Both the encoder and the decoder were composed of~3~hidden layers with $128$ neurons each. In the case of MISEP we used the PNL version from \Citep{pnlmisep}. The outcomes from each method were measured both by $max\_corr$ and $\mixname{}$ against the true source components.

\begin{figure*}[!h]
\centering

\includegraphics[width=0.49\linewidth]{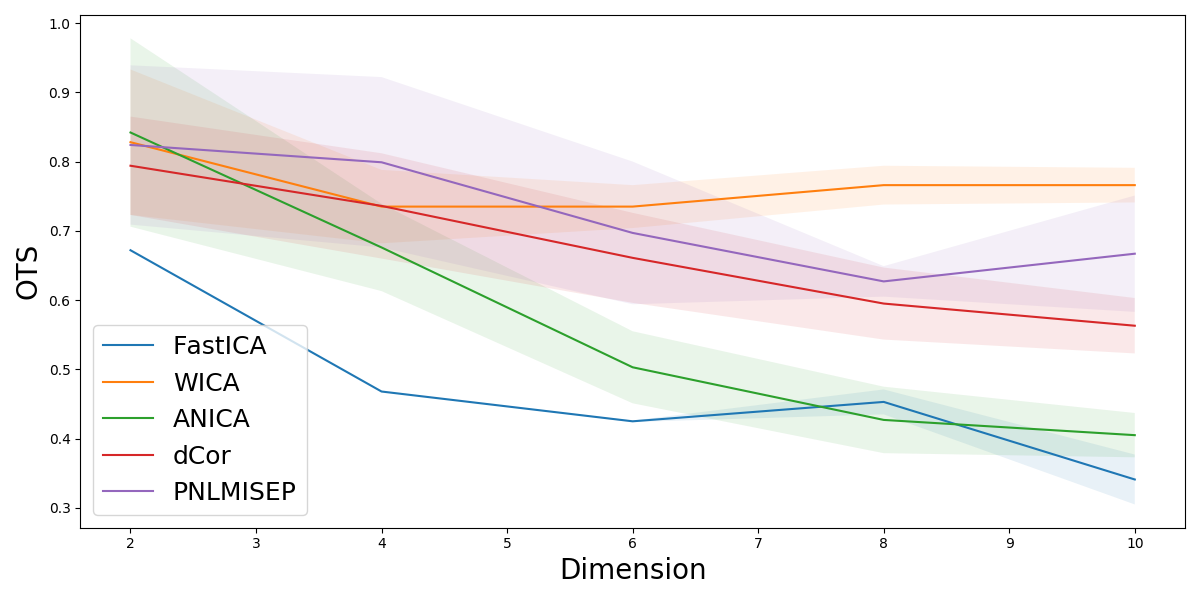}
\hfill
\includegraphics[width=0.49\linewidth]{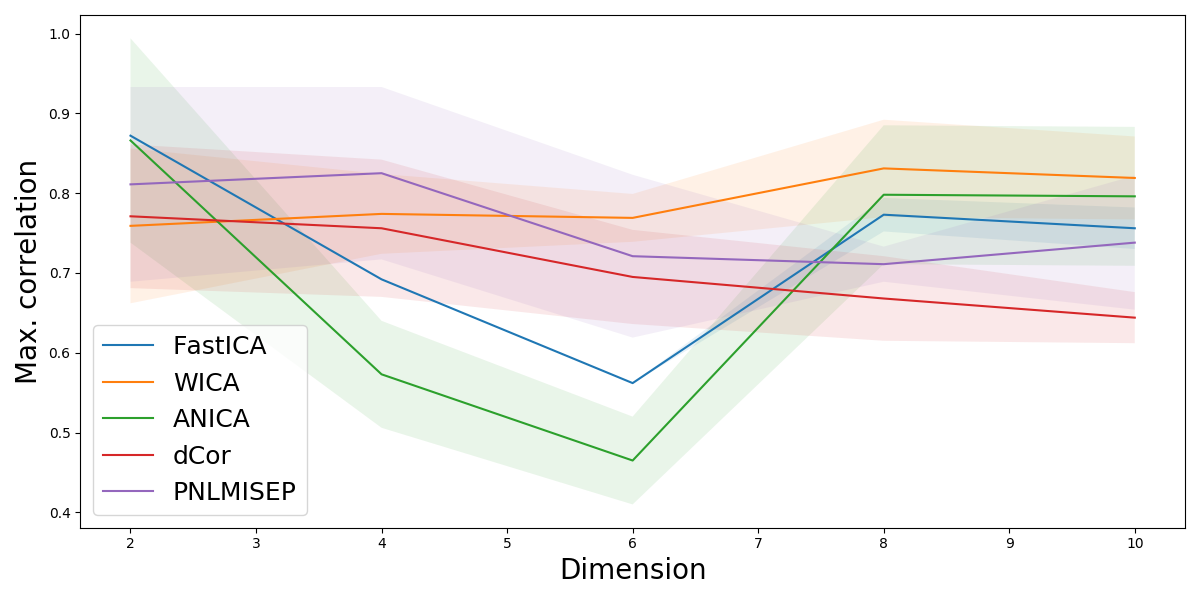}
\caption{Comparison between standard ICA methods (PNLMISEP, dCor, ANICA, FastICA) and our approach 
by using \mixname{} (left) and $max\_corr$ (right) measures in the setup where $50$ mixing iterations were performed. In the experiment we train five models and present the mean and standard deviation of each of the used measures (the higher the better). One can observe that \wica{} consistently obtains good results for all of the dimensions and outperforms the other methods in higher dimensions. Moreover, it has the lowest standard deviation across all the nonlinear algorithms. More numerical results of the experiment are presented in Table \ref{fig:table}.}\label{fig:spearman_1}
\end{figure*}

\paragraph{Performance across different dimensions.} We plotted the results of this experiment on $50$ mixes\footnote[5]{We considered the setting with $50$ mixing iterations as the hardest one.} with respect to the data dimension $d$ in Fig.~ \ref{fig:spearman_1}. The outcomes demonstrated that the \wica{} method outperformed any other nonlinear algorithm in the proposed task by achieving high and stable results regardless of the considered data dimension. In~the~case of the results stability, \wica{}~losses only to the linear method -- FastICA -- which, unfortunately, cannot satisfactorily factorize nonlinear data. This experiment demonstrated that \wica{} is a strong competitor to other models in a fully unsupervised environment for nonlinear~ICA.

It is also worth to mention the difference between the results measured by \mixname{} and $max\_corr$ for the ANICA and FastICA models applied in high dimension. We hypothesize that this may indicate that those algorithms were able to retrieve very well only small subset of the components, while the remaining variables were still highly mixed, leading to a similar effect as the one described in Section \ref{sec:OTS}.

\paragraph{Performance across different mixes.} For every model we evaluated the mean \mixname{} and $max\_corr$ score on a given dimension $d$ and number of mixing iterations $i$. Then, for each pair $(d,i)$ we ranked the tested models based on their performance. We report the mean rank of models for each mixing iteration $i$ in Fig. \ref{fig:ranks} (the lower the better). 

One may observed that for tasks relatively similar to the linear case, where number of mixes is equal to $10$, the PNLMISEP method performs the best both on $max\_corr$ and \mixname{}. However, as the number of mixes increases, the \wica{} algorithm usually outperforms all the other methods in both measures, achieving the lowest mean rank.
As a complement to the above discussion we also provide the complete numerical results for all mixtures on all tested dimensions in Table~\ref{fig:table}. 

\begin{table*}[!h]

\centering
{\small
\begin{tabular}{l|c|c|c|c|c|c|c|}
\toprule
\multicolumn{1}{|l|}{\textbf{Measure}} & \multicolumn{1}{c|}{\textbf{Dim}} & \multicolumn{1}{c|}{\textbf{Mixes}} & \multicolumn{1}{c|}{\textbf{ \wica{} }} & \multicolumn{1}{c|}{\textbf{FastICA}} & \multicolumn{1}{c|}{\textbf{ANICA}} & \multicolumn{1}{c|}{\textbf{dCor}} & \textbf{PNLMISEP}                      \\ \midrule
\multirow{25}{*}{$max\_corr$}   & 2                                                  &                                                      & 0.771$\pm$0.013                                                           & \textbf{0.965$\pm$0.001}              & 0.631$\pm$0.112                                      & 0.901$\pm$0.058                                     & 0.942$\pm$0.045                                         \\
                              & 4                                                  &                                                      & \textbf{0.910$\pm$0.065}                                 & 0.710$\pm$0.000                                        & 0.588$\pm$0.063                                      & 0.552$\pm$0.278                                     & 0.645$\pm$0.383                                         \\
                              & 6                                                  & 10                                                   & \textbf{0.821$\pm$0.041}                                 & 0.578$\pm$0.000                                        & 0.505$\pm$0.062                                      & 0.696$\pm$0.059                                     & 0.808$\pm$0.063                                         \\
                              & 8                                                  &                                                      & \textbf{0.814$\pm$0.058}                                 & 0.759$\pm$0.046                                        & 0.769$\pm$0.065                                      & 0.658$\pm$0.044                                     & 0.812$\pm$0.085                                         \\
                              & 10                                                 &                                                      & 0.812$\pm$0.049                                                           & 0.770$\pm$0.058                                        & \textbf{0.837$\pm$0.042}            & 0.658$\pm$0.041                                     & 0.820$\pm$0.077                                         \\ \cmidrule{2-8}
                              & 2                                                  &                                                      & 0.870$\pm$0.088                                                           & 0.827$\pm$0.000                                        & 0.817$\pm$0.080                                      & \textbf{0.883$\pm$0.060}           & 0.853$\pm$0.156                                         \\
                              & 4                                                  &                                                      & \textbf{0.957$\pm$0.059}                                 & 0.751$\pm$0.000                                        & 0.559$\pm$0.059                                      & 0.756$\pm$0.065                                     & 0.833$\pm$0.069                                         \\
                              & 6                                                  & 20                                                   & \textbf{0.795$\pm$0.033}                                 & 0.574$\pm$0.015                                        & 0.480$\pm$0.053                                      & 0.696$\pm$0.052                                     & \textbf{0.795$\pm$0.056}               \\
                              & 8                                                  &                                                      & \textbf{0.844$\pm$0.055}                                 & 0.770$\pm$0.013                                        & 0.803$\pm$0.085                                      & 0.703$\pm$0.060                                     & 0.813$\pm$0.056                                         \\
                              & 10                                                 &                                                      & \textbf{ 0.858$\pm$0.065}                                & 0.743$\pm$0.010                                        & 0.751$\pm$0.056                                      & 0.634$\pm$0.051                                     & 0.688$\pm$0.058                                         \\ \cmidrule{2-8}
                              & 2                                                  &                                                      & 0.925$\pm$0.100                                                           & 0.819$\pm$0.001                                        & 0.702$\pm$0.100                                      & 0.824$\pm$0.067                                     & \textbf{0.939$\pm$0.037}               \\
                              & 4                                                  &                                                      & 0.820$\pm$0.051                                                           & 0.673$\pm$0.002                                        & 0.571$\pm$0.086                                      & 0.788$\pm$0.070                                     & \textbf{0.898$\pm$0.070}               \\
                              & 6                                                  & 30                                                   & \textbf{0.887$\pm$0.036}                                 & 0.572$\pm$0.000                                        & 0.521$\pm$0.058                                      & 0.687$\pm$0.075                                     & 0.752$\pm$0.052                                         \\
                              & 8                                                  &                                                      & 0.746$\pm$0.050                                                           & 0.800$\pm$0.005                                        & \textbf{0.827$\pm$0.055}            & 0.639$\pm$0.037                                     & 0.772$\pm$0.104                                         \\
                              & 10                                                 &                                                      & \textbf{0.835$\pm$0.052}                                 & 0.751$\pm$0.010                                        & 0.814$\pm$0.062                                      & 0.675$\pm$0.033                                     & 0.740$\pm$0.021                                         \\ \cmidrule{2-8}
                              & 2                                                  &                                                      & 0.862$\pm$0.057                                                           & 0.882$\pm$0.020                                        & 0.746$\pm$0.093                                      & 0.852$\pm$0.079                                     & \textbf{0.931$\pm$0.043}               \\
                              & 4                                                  &                                                      & \textbf{0.847$\pm$0.048}                                 & 0.681$\pm$0.003                                        & 0.580$\pm$0.071                                      & 0.761$\pm$0.101                                     & 0.836$\pm$0.094                                         \\
                              & 6                                                  & 40                                                   & 0.701$\pm$0.039                                                           & 0.585$\pm$0.010                                        & 0.468$\pm$0.059                                      & 0.696$\pm$0.039                                     & \textbf{0.815$\pm$0.064}               \\
                              & 8                                                  &                                                      & \textbf{0.861$\pm$0.055}                                 & 0.781$\pm$0.001                                        & 0.792$\pm$0.079                                      & 0.658$\pm$0.054                                     & 0.734$\pm$0.108                                         \\
                              & 10                                                 &                                                      & \textbf{0.859$\pm$0.056}                                 & 0.746$\pm$0.001                                        & 0.749$\pm$0.112                                      & 0.642$\pm$0.030                                     & 0.802$\pm$0.049                                         \\ \cmidrule{2-8}
                              & 2                                                  &                                                      & 0.759$\pm$0.097                                                           & \textbf{0.872$\pm$0.000}              & 0.866$\pm$0.128                                      & 0.771$\pm$0.090                                     & 0.811$\pm$0.122                                         \\
                              & 4                                                  &                                                      & 0.774$\pm$0.050                                                           & 0.692$\pm$0.000                                        & 0.573$\pm$0.067                                      & 0.756$\pm$0.086                                     & \textbf{0.825$\pm$0.108}               \\
                              & 6                                                  & 50                                                   & \textbf{0.769$\pm$0.030}                                 & 0.562$\pm$0.000                                        & 0.465$\pm$0.055                                      & 0.695$\pm$0.059                                     & 0.721$\pm$0.102                                         \\
                              & 8                                                  &                                                      & \textbf{0.831$\pm$0.061}                                 & 0.773$\pm$0.021                                        & 0.798$\pm$0.087                                      & 0.668$\pm$0.053                                     & 0.711$\pm$0.022                                         \\
                              & 10                                                 &                                                      & \textbf{0.819$\pm$0.052}                                 & 0.756$\pm$0.026                                        & 0.796$\pm$0.087                                      & 0.644$\pm$0.032                                     & 0.738$\pm$0.084                                         \\ \midrule

\multirow{25}{*}{\mixname{}}         & 2                                                  &                                                      & 0.798$\pm$0.048                                                           & 0.652$\pm$0.001                                        & 0.938$\pm$0.088                                      & 0.899$\pm$0.076                                     & \textbf{0.948$\pm$0.041}               \\
                              & 4                                                  &                                                      & \textbf{0.890$\pm$0.065}                                 & 0.582$\pm$0.000                                        & 0.784$\pm$0.062                                      & 0.554$\pm$0.264                                     & 0.652$\pm$0.360                                         \\
                              & 6                                                  & 10                                                   & \textbf{0.807$\pm$0.043}                                 & 0.419$\pm$0.000                                        & 0.571$\pm$0.056                                      & 0.666$\pm$0.064                                     & 0.779$\pm$0.046                                         \\
                              & 8                                                  &                                                      & \textbf{0.784$\pm$0.025}                                 & 0.457$\pm$0.058                                        & 0.431$\pm$0.054                                      & 0.594$\pm$0.051                                     & 0.769$\pm$0.097                                         \\
                              & 10                                                 &                                                      & 0.742$\pm$0.030                                                           & 0.405$\pm$0.077                                        & 0.405$\pm$0.032                                      & 0.556$\pm$0.041                                     & \textbf{0.758$\pm$0.103}               \\ \cmidrule{2-8}
                              & 2                                                  &                                                      & \textbf{0.884$\pm$0.089}                                 & 0.674$\pm$0.000                                        & 0.820$\pm$0.088                                      & 0.850$\pm$0.105                                     & 0.864$\pm$0.099                                         \\
                              & 4                                                  &                                                      & \textbf{0.945$\pm$0.064}                                 & 0.561$\pm$0.000                                        & 0.760$\pm$0.062                                      & 0.722$\pm$0.061                                     & 0.808$\pm$0.062                                         \\
                              & 6                                                  & 20                                                   & \textbf{0.776$\pm$0.031}                                 & 0.456$\pm$0.015                                        & 0.500$\pm$0.053                                      & 0.650$\pm$0.053                                     & 0.764$\pm$0.038                                         \\
                              & 8                                                  &                                                      & \textbf{0.797$\pm$0.029}                                 & 0.442$\pm$0.014                                        & 0.432$\pm$0.064                                      & 0.623$\pm$0.071                                     & 0.767$\pm$0.053                                         \\
                              & 10                                                 &                                                      & \textbf{0.790$\pm$0.029}                                 & 0.406$\pm$0.034                                        & 0.404$\pm$0.027                                      & 0.551$\pm$0.041                                     & 0.625$\pm$0.048                                         \\ \cmidrule{2-8}
                              & 2                                                  &                                                      & 0.797$\pm$0.105                                                           & 0.655$\pm$0.000                                        & 0.815$\pm$0.102                                      & 0.808$\pm$0.089                                     & \textbf{0.903$\pm$0.051}               \\
                              & 4                                                  &                                                      & 0.805$\pm$0.050                                                           & 0.468$\pm$0.001                                        & 0.661$\pm$0.074                                      & 0.760$\pm$0.074                                     & \textbf{0.884$\pm$0.065}               \\
                              & 6                                                  & 30                                                   & \textbf{0.865$\pm$0.037}                                 & 0.486$\pm$0.000                                        & 0.497$\pm$0.061                                      & 0.653$\pm$0.066                                     & 0.724$\pm$0.034                                         \\
                              & 8                                                  &                                                      & 0.702$\pm$0.033                                                           & 0.491$\pm$0.006                                        & 0.432$\pm$0.043                                      & 0.559$\pm$0.054                                     & \textbf{0.730$\pm$0.086}               \\
                              & 10                                                 &                                                      & \textbf{0.782$\pm$0.027}                                 & 0.440$\pm$0.021                                        & 0.405$\pm$0.034                                      & 0.560$\pm$0.063                                     & 0.657$\pm$0.043                                         \\ \cmidrule{2-8}
                              & 2                                                  &                                                      & 0.869$\pm$0.053                                                           & 0.668$\pm$0.001                                        & 0.868$\pm$0.087                                      & 0.852$\pm$0.087                                     & \textbf{0.938$\pm$0.032}               \\
                              & 4                                                  &                                                      & 0.781$\pm$0.049                                                           & 0.455$\pm$0.002                                        & 0.667$\pm$0.060                                      & 0.729$\pm$0.102                                     & \textbf{0.822$\pm$0.092}               \\
                              & 6                                                  & 40                                                   & 0.636$\pm$0.039                                                           & 0.405$\pm$0.008                                        & 0.516$\pm$0.059                                      & 0.649$\pm$0.044                                     & \textbf{0.772$\pm$0.071}               \\
                              & 8                                                  &                                                      & \textbf{0.729$\pm$0.029}                                 & 0.482$\pm$0.001                                        & 0.428$\pm$0.046                                      & 0.572$\pm$0.039                                     & 0.663$\pm$0.083                                         \\
                              & 10                                                 &                                                      & \textbf{0.820$\pm$0.025}                                 & 0.345$\pm$0.004                                        & 0.397$\pm$0.035                                      & 0.569$\pm$0.046                                     & 0.752$\pm$0.073                                         \\ \cmidrule{2-8}
                              & 2                                                  &                                                      & 0.828$\pm$0.105                                                           & 0.672$\pm$0.000                                        & \textbf{0.842$\pm$0.136}            & 0.794$\pm$0.071                                     & 0.824$\pm$0.115                                         \\
                              & 4                                                  &                                                      & 0.735$\pm$0.053                                                           & 0.468$\pm$0.000                                        & 0.676$\pm$0.063                                      & 0.736$\pm$0.076                                     & \textbf{0.799$\pm$0.123}               \\
                              & 6                                                  & 50                                                   & \textbf{0.735$\pm$0.031}                                 & 0.425$\pm$0.001                                        & 0.503$\pm$0.052                                      & 0.661$\pm$0.065                                     & 0.697$\pm$0.103                                         \\
                              & 8                                                  &                                                      & \textbf{0.766$\pm$0.028}                                 & 0.453$\pm$0.018                                        & 0.427$\pm$0.048                                      & 0.595$\pm$0.052                                     & 0.627$\pm$0.022                                         \\
                              & 10                                                 &                                                      & \textbf{0.766$\pm$0.025}                                 & 0.341$\pm$0.036                                        & 0.405$\pm$0.032                                      & 0.563$\pm$0.040                                     & 0.667$\pm$0.084    \\                       \bottomrule             
\end{tabular}
}

\caption{Comparison between nonlinear ICA methods (PNLMISEP, dCor, ANICA, WICA) and the classical linear ICA approach (FastICA) on images separation problem (with different dimensions) by using $max\_corr$ and \mixname{} measures. In the experiment we tuned and trained four models (excluding FastICA, which~is~a~linear~model) and present mean and standard deviation in the tabular form.}
\label{fig:table}
\end{table*}

\subsection{Decomposing EEG data}

Finally we want to show usability of the \wica{} method on real life data. An example of a task that can be tackle by the ICA algorithms is electroencephalogram (EEG) decomposition.

An EEG signal is~a~test used to evaluate the electrical activity in the brain. The brain cells communicate via electrical impulses and are active all the time. In~the~original scalp channel data, each row of the data recording matrix represents the time course of summed voltage differences between source projections to one data channel and one or more reference channels. We followed a common experiment framework proposed in \Citep{icaEEG, Onton_2006}, to detect artifacts in unmixed signals representation which can suggest a blinks or an eye movement during the test.

The setup for this decomposition is different than in previous sections. An original EEG mixture took for this experiment, consisted of~$40$~scalp electrode signals. Those signals were selected as an input for the \wica{} model. Retrieved data were analysed by an expert, who selected signs of a blinking on recovered components. Manually selected subset of suspicious components, were then nullified. Unmixed signal with masked (by nullification) components were then feed back to the decoder which came from the training of the \wica{} model. 

As a researcher we are not aware how deeply EEG signals are mixed or dependent. The crucial functionality that ICA serves in this setting is normalizing and cleansing of the dataset.  From that point, time series produced from recovered signals have to be analysed by an expert. In this experiment we want to prove that high dimension of the input data and the unknown entanglement of the components is not a limitation for the \wica{}. Visual results of this experiment are presented on Fig \ref{fig:eeg_retived}. For a comparison we used results from other standard ICA algorithm used for this kind of a task -- linear FastICA. The details of "remixing" process for this method are descirbed in \Citep{icaEEG}.
This experiment showed that \wica{} is able to handle multidimensional data highly above the volume tested for other nonlinear models. 
Moreover, results our method for this task works well enough to be used as a preliminary step of cleaning the data.


\section{Conclusion}

In this paper we presented a new approach to the nonlinear ICA task. 

In addition to the investigation of \wica{} method, which proves to be matching the results of all other tested nonlinear algorithms, we proposed a new mixing function for validating nonlinear tasks in a structurized manner. Our mixing scales to higher dimensions and is easily invertible.

Lastly, we defined \mixname{}, a measure that can catch nonlinear dependence and is easy to compute. The \mixname{} measure and the proposed mixing have the potential to become benchmarking tools for all future work in this field. 

\section{Acknowledgements}

The work of P. Spurek was supported by the National Centre of Science (Poland) Grant No. 2019/33/B/ST6/00894. The work of J. Tabor was supported by the National Centre of Science (Poland) Grant No. 2017/25/B/ST6/01271. 
A. Nowak carried out this work within the research project "Bio-inspired artificial neural networks" (grant no. POIR.04.04.00-00-14DE/18-00) within the Team-Net program of the Foundation for Polish Science co-financed by the European Union under the European Regional Development Fund.

\end{document}